%% file: main.tex
\documentclass[11pt,a4paper]{article}
\input{preamble}

\title{From Sequential Nodes to GPU Batches: Parallel Branch and Bound for Optimal $k$-Sparse GLMs}

\author[$\ast,\dagger$]{Jiachang Liu}
\author[$\ddagger$]{Andrea Lodi}
\affil[$\ast$]{Center for Data Science for Enterprise and Society, Cornell University, Ithaca, USA}
\affil[$\dagger$]{\mbox{School of Operations Research and Information Engineering, Cornell University, Ithaca, USA}}
\affil[$\ddagger$]{Jacobs Technion-Cornell Institute, Cornell Tech and Technion--IIT, New York, USA\\
\texttt{\{jiachang.liu, andrea.lodi\}@cornell.edu}}

\date{}

\begin{document}
\maketitle
\addtocontents{toc}{\protect\setcounter{tocdepth}{-10}}

\input{sections/0_Abstract.tex}

\input{sections/1_Introduction.tex}

\input{sections/2_Related_Work.tex}
\input{sections/3_Preliminaries.tex}
\input{sections/4_Methodology.tex}
\input{sections/5_Experiments.tex}
\input{sections/6_Conclusion.tex}

\section*{Acknowledgements}
This work used the Delta system at the National Center for Supercomputing Applications through allocation CIS250029 from the Advanced Cyberinfrastructure Coordination Ecosystem: Services \& Support (ACCESS) program, which is supported by National Science Foundation grants \#2138259, \#2138286, \#2138307, \#2137603, and \#2138296.

\bibliographystyle{plainnat}
\bibliography{references}

\input{sections/7_Appendix.tex}

\end{document}

%% file: preamble.tex
\usepackage[margin=1in]{geometry}
\usepackage[utf8]{inputenc}
\usepackage[T1]{fontenc}
\usepackage{authblk}
\usepackage{graphicx}
\usepackage{tikz}
\usepackage{enumitem}
\usepackage[ruled,vlined,linesnumbered]{algorithm2e}
\usepackage[numbers,compress]{natbib}
\usepackage{microtype}
\usepackage{xcolor}
\usepackage[normalem]{ulem}
\usepackage{nicefrac}
\input{math_macros.tex}

%% file: math_macros.tex
\usepackage{amsthm}
\usepackage{amsmath}
\allowdisplaybreaks
\usepackage{bbm}

\usepackage{amssymb}

\usepackage{bm}
\usepackage{hyperref}
\usepackage{url}
\usepackage{booktabs} 
\usepackage{multirow}

\usepackage{rotating}
\usepackage{tcolorbox}

\newtheorem{theorem}{Theorem}[section]
\newtheorem*{namedtheorem}{Theorem}

\newcommand{\bc}{\bm{c}}

\newcommand{\bq}{\bm{q}}

\newcommand{\bu}{\bm{u}}

\newcommand{\bv}{\bm{v}}
\newcommand{\bx}{\bm{x}}

\newcommand{\bX}{\bm{X}}
\newcommand{\bQ}{\bm{Q}}

\newcommand{\bz}{\bm{z}}

\newcommand{\bs}{\bm{s}}

\newcommand{\by}{\bm{y}}

\newcommand{\balpha}{\bm{\alpha}}

\newcommand{\bB}{\bm{B}}
\newcommand{\bK}{\bm{K}}
\newcommand{\bG}{\bm{G}}
\newcommand{\bR}{\bm{R}}
\newcommand{\bS}{\bm{S}}

\newcommand{\bU}{\bm{U}}

\newcommand{\bbeta}{\bm{\beta}}

\newcommand{\brho}{\bm{\rho}}

\newcommand{\bpi}{\bm{\pi}}

\newcommand{\bPhi}{\bm{\Phi}}
\newcommand{\bSigma}{\bm{\Sigma}}

\newcommand{\bZeta}{\bm{\mathcal{Z}}}
\newcommand{\bzeta}{\bm{\zeta}}

\newcommand{\bbR}{\mathbb{R}}

\newcommand{\ind}{\mathbbm{1}}

\newcommand{\calF}{\mathcal{F}}
\newcommand{\calG}{\mathcal{G}}

\newcommand{\calJ}{\mathcal{J}}

\newcommand{\calL}{\mathcal{L}}

\newcommand{\calN}{\mathcal{N}}

\newcommand{\calR}{\mathcal{R}}
\newcommand{\calS}{\mathcal{S}}
\newcommand{\calT}{\mathcal{T}}
\newcommand{\calV}{\mathcal{V}}

\DeclareMathOperator*{\argmax}{arg\,max}
\DeclareMathOperator*{\argmin}{arg\,min}

\def\1{\bm{1}}

\def\vo{{\bm{o}}}

\DeclareMathAlphabet{\mathsfit}{\encodingdefault}{\sfdefault}{m}{sl}
\SetMathAlphabet{\mathsfit}{bold}{\encodingdefault}{\sfdefault}{bx}{n}

\def\vert#1{\lvert #1 \rvert}

\definecolor{predcolor}{gray}{0.95}
\definecolor{scorecolor}{gray}{0.95}
\definecolor{riskcolor}{gray}{0.95}
\definecolor{transparentcolor}{gray}{0.95}

\definecolor{ForestGreen}{rgb}{0.13, 0.55, 0.13} % http://latexcolor.com/
\usepackage{color} % Load color package                                                                     
\DeclareMathOperator*{\TopSum}{TopSum}
\DeclareMathOperator{\prox}{prox}
\undef\st
\DeclareMathOperator{\st}{s.\!t.\!}

\usetikzlibrary{arrows.meta}

%% file: sections/0_Abstract.tex
\begin{abstract}
GPUs have significantly accelerated first-order methods for large-scale optimization, especially in continuous optimization.
However, this success has not transferred cleanly to problems with discrete variables, combinatorial structure, and nonlinear objectives, such as certifying optimal solutions for cardinality-constrained generalized linear models.
Major challenges include the sequential processing of heterogeneous nodes in branch and bound (BnB) and frequent data movement between the CPU and GPU.
We propose a simple, generic, and modular CPU--GPU framework that processes multiple BnB nodes in batches on GPUs.
The framework is built around a small set of GPU-efficient routines and uses padding together with lightweight custom kernels to handle irregular node data structures.
Experiments show one to two orders of magnitude speedups and zero optimality gap on challenging instances.
The framework can also be extended to collect the entire Rashomon set, enabling downstream statistical analysis such as variable-importance analysis and model selection under secondary user-specific measures (\textit{e.g.}, AUC in classification).
\end{abstract}

%% file: sections/1_Introduction.tex
\section{Introduction}\label{sec:introduction}
GPUs have become a central computing platform for large-scale optimization in machine learning.
Early and successful applications include neural networks~\citep{raina2009large,krizhevsky2012imagenet}.
More recently, GPUs have also been used to scale large linear, quadratic, and conic optimization problems, especially through first-order methods~\citep{applegate2021practical,lu2023cupdlp,lu2023practical,lin2025practicalgpu}.
For example, PDLP~\citep{applegate2021practical} can solve linear programs with millions of variables.
This progress is possible because the main cost in many first-order methods is gradient computation, which reduces largely to matrix-vector operations that GPUs can execute efficiently.

However, this success has not transferred cleanly to (machine-learning)
problems involving discrete and combinatorial structures.
In this paper, we aim to solve the following cardinality-constrained generalized linear models (GLMs) at scale:
\begin{align}
    \min_{\bbeta\in\bbR^p}
    \left\{
        f(\bX\bbeta,\by) + \lambda_2\|\bbeta\|_2^2
        \ :\
        \|\bbeta\|_0\le k,\ \|\bbeta\|_\infty\le M
    \right\}.
    \label{eq:intro_sparse_glm}
\end{align}
Here, $\bX\in\bbR^{n\times p}$ is the feature matrix, $\by$ is the response, $f$ is a convex differentiable GLM loss, $k$ is the sparsity budget, $M$ is a coefficient bound, and $\lambda_2>0$ is the ridge coefficient.
To see the connection to the discrete and combinatorial optimization more explicitly, we introduce binary support indicators $\bz\in\{0,1\}^p$ and rewrite~\eqref{eq:intro_sparse_glm} in the mixed-integer nonlinear programming (MINLP) formulation
\begin{align}
    \min_{\bbeta\in\bbR^p,\bz\in\{0,1\}^p}
    \left\{
        f(\bX\bbeta,\by)+\lambda_2\|\bbeta\|_2^2
        :
        \bm 1^\top\bz\le k,\ |\beta_j|\le Mz_j,\ j\in[p]
    \right\}.
    \label{eq:intro_sparse_glm_mip}
\end{align}
Solving exact cardinality-constrained problems is important in scientific, medical, financial, and operational settings~\citep{ustun2016supersparse,ustun2019learning,liu2022fastsparse,liu2022fasterrisk,liu2024okridge}, especially when we want high predictive performance using only a small set of variables.
In high-dimensional settings with highly correlated features, approximation-based methods (\textit{e.g.}, lasso) can produce poor solutions.
Although~\eqref{eq:intro_sparse_glm_mip} focuses on a specific model class, it captures many key MINLP elements.
Progress on this problem class can therefore inform GPU-accelerated optimization beyond sparse GLMs.

However, two major barriers prevent GPUs from delivering similar speedups for exact sparse GLM certification.
The first barrier is that standard branch and bound (BnB) processes the search tree sequentially at the node level.
To certify optimality, BnB repeatedly partitions the feasible region by fixing selected support indicators $z_j$ to either $0$ or $1$.
A short visual primer on this procedure is given in Appendix~\ref{appendix_sec:bnb_primer}.
At each node $\calN$, it computes a valid lower bound, compares this bound with the incumbent objective value (the objective value of the best feasible solution found so far), and then either prunes the node or branches on another variable.
Recent work~\citep{liu2025scalable,liu2026gpufriendlylinearlyconvergentfirstorder} accelerates the lower-bound computation for a single node on GPUs, but the overall tree search still advances one node at a time.
When certification requires exploring millions of nodes, this sequential node processing remains a major bottleneck.

The second barrier is that several important BnB procedures are still usually designed as CPU-side routines.
These include feasible-solution search for improving the incumbent and variable selection for branching.
This issue becomes more pronounced when the lower-bound relaxation is solved only in the coefficient space $\bbeta$, because the fractional relaxed indicators $\bz$ are then not directly available for support selection or branching.
If lower-bound computation runs on the GPU but feasible-solution search and branching remain on the CPU, each node requires repeated CPU--GPU synchronization.
These transfers interrupt the GPU workload and limit the benefit of accelerating the lower-bound solve alone.
Moreover, transferring data between the CPU and GPU for every single node is inefficient.
Modern data-center GPUs rely on High Bandwidth Memory (HBM) to deliver very high on-device memory throughput, so the preferred pattern is to transfer larger batches less frequently and keep repeated numerical work on the device.

In this work, we address these barriers by processing many open BnB nodes together on the GPU.
Our main contributions are:
\begin{enumerate}[label=$\diamond$]
    \item \emph{Hybrid CPU--GPU Framework:}
    We propose a simple and modular CPU--GPU framework for exact BnB on cardinality-constrained GLMs.
    The CPU manages the irregular tree-search logic, including the open-node queue, incumbent updates, and child-node generation.
    The GPU performs the batched numerical work, including lower-bound solves, rounding, re-optimization, and branching-variable selection.
    
    \item \emph{GPU Routines and Padding Strategy:}
    We show that multi-node BnB computation can be organized around a small set of GPU-efficient routines, including matrix--matrix multiplication, columnwise sorting, and gather-and-reduce operations.
    To handle node-specific irregular data structures, we use padding to make the batch representation more uniform.
    This allows optimized GPU routines to handle the expensive uniform work, while custom kernels are reserved for lightweight irregular steps.
    
    \item \emph{Support/Variable Selections on GPUs:}
    We prove that the relaxed indicator variables can be recovered exactly from the relaxed coefficient vector, without solving an additional optimization problem.
    This result justifies selecting feasible supports and branching variables directly from the coefficient values, using batched GPU operations rather than sequential CPU-side routines.
    
    \item \emph{Empirical Performance:}
    Experiments on challenging sparse GLM instances show substantial runtime reductions.
    The batched GPU framework achieves up to one to two orders of magnitude speedup over the one-node-at-a-time GPU baseline and certifies optimality on difficult cases where the baseline leaves a nontrivial optimality gap.
    
    \item \emph{Applications to Rashomon-Set Collection:}
    The same framework can be modified to collect the Rashomon set of near-optimal $k$-sparse GLMs, extending exact Rashomon-set collection beyond sparse decision trees.
    This allows users to compare competing sparse models, study variable importance, and select a model by considering the optimized training objective together with secondary criteria such as accuracy, AUC, calibration, or other task-specific measures.
    
\end{enumerate}

%% file: sections/2_Related_Work.tex
\section{Related Work}\label{sec:related-work}

\paragraph{Mixed-integer Programming for Sparse GLMs}
Mixed-integer programming (MIP) has been widely used to model and solve sparse GLM-type problems.
From an application point of view, MIP-based sparse models have been used to construct scoring systems~\citep{ustun2016supersparse,ustun2019learning,liu2022fasterrisk} and to identify nonlinear dynamical systems~\citep{bertsimas2023learning,liu2024okridge}.
From a theoretical point of view, a substantial line of work studies stronger convex relaxations, perspective reformulations, and convex-hull descriptions for sparse and indicator-variable formulations~\citep{gunluk2010perspective,atamturk2020safe,atamturk2021sparse,wei2020convexification,atamturk2020supermodularity,wei2022ideal,shafiee2024constrained}.
From a computational point of view, many papers develop algorithms for solving sparse regression and classification problems at larger scales~\citep{xie2020scalable,bertsimas2020sparse1,bertsimas2020sparse2,hazimeh2020fast,dedieu2021learning,hazimeh2022sparse,guyard2024el0ps,liu2025scalable,liu2026gpufriendlylinearlyconvergentfirstorder}.
However, the computational core of most existing approaches remains CPU-centered: they either call commercial MIP solvers, design problem-specific branch-and-bound routines, or combine continuous relaxations with CPU-side search and heuristics.
In contrast, our work shifts the computational design toward GPU-parallel branch and bound for the sparse GLM family.

\paragraph{GPU-accelerated Optimization}
GPU acceleration has recently gained traction in continuous optimization, including large-scale linear programming via primal-dual hybrid gradient methods~\citep{applegate2021practical,lu2023cupdlp}, convex quadratic programming~\citep{lu2023practical}, conic programming~\citep{lin2025practicalgpu}, and semidefinite programming~\citep{han2024accelerating}.
For discrete optimization, the use of GPUs is less mature.
Existing work includes GPU-accelerated relaxations~\citep{de2024power,liu2025scalable,liu2026gpufriendlylinearlyconvergentfirstorder}, GPU-accelerated primal heuristics~\citep{corduk2025gpuacceleratedprimalheuristics}, and batched first-order LP methods for MIP subroutines such as strong branching and bound tightening~\citep{blin2026batchedlp}.
The closest concurrent work is~\citet{meng2026gpu}, which develops a GPU-accelerated BnB method specifically for sparse linear regression.
In contrast, we target the broader class of sparse GLMs and use a deliberately modular design: lower-bound computation, rounding, re-optimization, branching-variable selection, batching, scheduling, and Rashomon-pool storage are separate components that can be improved independently.

\paragraph{Rashomon Sets}
Modern research on Rashomon sets, namely collections of near-optimal solutions, studies both algorithmic and statistical questions~\citep{breiman2001statistical}.
On the algorithmic side, TreeFARMS explores the whole Rashomon set of sparse decision trees~\citep{xin2022exploring}, and subsequent work uses Rashomon sets for variable-importance distributions, interactive model editing, active learning, and predictive-equivalence analysis~\citep{donnelly2023rashomon,donnelly2025rashomonproto,babbar2025nearoptimal,mctavish2025predictiveequivalence,rudin2024amazing,nguyen2026realitrees,haghighat2026resolving}.
For sparse GLMs, FasterRisk generates many accurate sparse risk scores with different supports~\citep{liu2022fasterrisk}, and related work studies sets of good generalized additive models with the same support but different coefficients~\citep{zhong2023sparsegam}.
Our work focuses on collecting the entire Rashomon set at the support level.
To the best of our knowledge, our framework is the first to do so for a broad class of statistical models beyond sparse decision trees.

%% file: sections/3_Preliminaries.tex
\section{Preliminaries}
\label{sec:preliminaries}

\paragraph{Lower Bound Computation}
To find a lower bound for each node $\calN$, one effective approach is to perform perspective relaxation~\citep{ceria1999convex, frangioni2006perspective, gunluk2010perspective} by replacing $\|\bbeta\|_2^2$ with $\sum_{j=1}^p \beta_j^2 / z_j$ and relaxing $z_j \in \{0, 1\}$ to $z_j \in [0, 1]$.
We use the usual perspective convention: $\beta_j^2/z_j=0$ when $(\beta_j,z_j)=(0,0)$ and $+\infty$ when $z_j=0$ but $\beta_j\ne0$.
This relaxation is useful because it gives a strong convex lower-bound problem at each node.

Following \citet{liu2026gpufriendlylinearlyconvergentfirstorder}, we can rewrite such a node relaxation as a convex composite problem:
\begin{align}
    \min_{\bbeta\in\bbR^p}
    \Phi_{\calN}(\bbeta)
    :=
    F(\bX\bbeta)+G_{\calN}(\bbeta),
    \label{eq:node_composite}
\end{align}
where $F(\bX\bbeta):=f(\bX\bbeta,\by)$ is the smooth GLM loss,
$G_{\calN}(\bbeta)=2\lambda_2 g_{\calN}(\bbeta)$, and $g_{\calN}(\bbeta)$ is an implicit function defined as
\begin{align}
    g_{\calN}(\bbeta)
    :=
    \inf_{\bz}
    \left\{
    \frac12\sum_{j=1}^p\beta_j^2/z_j:
    \begin{array}{l}
        z_j \in [0, 1]\ \forall j\in\calJ_{f}(\calN),\quad
        \bm{1}^\top\bz\le k,\quad
        |\beta_j|\le M z_j,\\
        z_j=0\ \forall j\in\calJ_0(\calN),\qquad
        z_j=1\ \forall j\in\calJ_1(\calN)
    \end{array}
    \right\},
\end{align}
$\calJ_0(\calN) := \{j:z_j=0\}$ and $\calJ_1(\calN) := \{j:z_j=1\}$ are partial fixing decisions on $\bz$, and $\calJ_f(\calN) := \{j:z_j\in[0, 1]\}$ denotes the set of free coordinates.

Problem~\eqref{eq:node_composite} can be solved efficiently by applying the proximal gradient method~\footnote{For simplicity of presentation, we display the vanilla proximal-gradient update. Acceleration, restart, line search, and adaptive stepsize rules can be incorporated in practice; the key operations for our later batched GPU extension are the gradient calculation and the proximal evaluation.}:
\begin{align}
    \bbeta
    &\leftarrow
    \operatorname{prox}_{\eta G_{\calN}}
    \!\left(\bbeta-\eta \nabla_{\bbeta} F(\bX\bbeta)\right)
    =
    \bu-\rho^{-1}\operatorname{prox}_{\rho g_{\calN}^*}(\rho\bu),
    \label{eq:pgd_step_vector_form}
\end{align}
where $\bu:=\bbeta-\eta \nabla_{\bbeta}F(\bX\bbeta)$ and $\rho:=(2\eta\lambda_2)^{-1}$, and the second equality above follows from the Moreau's identity formula~\citep[Theorem~6.45]{beck2017first}.
The proximal operator of $g_{\calN}$ can be efficiently and exactly evaluated by using sorting and the PAVA algorithm~\citep{busing2022monotone}.

\paragraph{Safe Lower Bound and Pruning}
We need to derive a safe lower bound to prune nodes in BnB.
To do this, first note that the Fenchel dual of problem~\eqref{eq:node_composite} is
\begin{align}
    \Psi_{\calN}(\bzeta)
    =
    -F^*(-\bzeta)-G_{\calN}^*(\bX^\top\bzeta).
    \label{eq:prelim_fenchel_dual}
\end{align}
Since $G_{\calN}=2\lambda_2 g_{\calN}$, by using the conjugate scaling rule, we have
\begin{align}
    G_{\calN}^*(\bu)
    &=
    2\lambda_2
    g_{\calN}^*
    \left(
        \frac{\bu}{2\lambda_2}
    \right),\\
    \text{and} \qquad g_{\calN}^*(\bq)
    &=
    \sum_{j\in\calJ_1(\calN)}H_M(q_j)
    +
    \TopSum_{\bar k(\calN)}
    \{H_M(q_j)\}_{j\in\calJ_f(\calN)},
    \label{eq:prelim_node_conjugate}
\end{align}
where $\bar k(\calN)=k-|\calJ_1(\calN)|$, $\TopSum_{\bar k(\calN)}$ sums the largest $\bar k(\calN)$ entries, and $H_M(\cdot)$ is the Huber loss with the threshold parameter $M$.
At each iteration $t$, after obtaining a primal iterate $\bbeta^t$ from~\eqref{eq:pgd_step_vector_form}, we form a dual iterate
\begin{align}
    \bzeta^t=-\nabla F(\bX\bbeta^t).
    \label{eq:dual_candidate_vector_form}
\end{align}
By weak duality~\citep{rockafellar1970convex}, we have $\Psi_{\calN}(\bzeta)\le \Phi_{\calN}(\bbeta)$ for any $\bbeta$ and $\bzeta$.
Therefore, $\Psi_{\calN}(\bzeta^t) \leq \min_{\bbeta} \Phi_{\calN} (\bbeta)$ for any dual iterate $\bzeta^t$, which means that $\Psi_{\calN}(\bzeta^t)$ is a safe lower bound.
Whenever $\Psi_{\calN}(\bzeta^t)$ is greater than or equal to the loss of the incumbent, we prune the node $\calN$.

\paragraph{Feasible Solution and Variable Branching}
Although the composite formulation in~\eqref{eq:node_composite} lets us solve each node relaxation over $\bbeta$, it does not explicitly return the relaxed indicator vector $\bz$.
We still need support candidates for incumbent updates and branching variables for child-node generation.
Existing OKGLM implementations handle these tasks on the CPU, using beam-search heuristics for feasible solutions and deletion-based scores for branching~\citep{liu2025scalable,liu2026gpufriendlylinearlyconvergentfirstorder}.
These CPU-side steps can be time-consuming, are hard to run on GPUs, and require frequent CPU--GPU communication across many BnB nodes.

%% file: sections/4_Methodology.tex
\section{Methodology}\label{sec:methodology}

\subsection{Hybrid CPU--GPU BnB framework}
We propose a hybrid CPU--GPU BnB framework, summarized in Figure~\ref{fig:gpu_bnb_blueprint}, for solving cardinality-constrained GLMs and for guiding future GPU implementations of mixed-integer nonlinear programs.

\input{figures/gpu_bnb_blueprint.tex}

The CPU owns the irregular tree logic: it stores open nodes, schedules batches, reconstructs node constraints and warm starts, updates the incumbent and global lower bound, prunes nodes, branches unresolved nodes, and inserts children back into the queue.
The GPU owns the dense numerical work: it processes many nodes in a batch, solves lower-bound relaxations, evaluates primal and dual bounds, and searches for feasible solutions in parallel.

The framework is intentionally modular.
Node ordering, lower-bound solves, feasible-solution search, and branching are separate components, so each can be replaced or improved without redesigning the full solver.
For example, node ordering may use breadth-first, depth-first, best-bound, or incumbent-guided rules from the BnB literature~\citep{linderoth1999computational,morrison2016branch}; lower-bound and re-optimization routines may use different first-order variants; and branching may use any GPU-friendly score (including scores inspired by classical MIP branching rules~\citep{achterberg2005branching,achterberg2009scip}) that returns a free feature index $j\in\calJ_f(\calN)$.
The main design requirement is that these choices should still expose batched GPU work whenever possible.
This organization keeps the exact BnB certificate on the CPU while turning the repeated numerical subproblems into GPU-efficient batched routines.

\subsection{Parallel multi-node lower-bound computation}
\label{subsec:parallel_multinode_lower_bound}

Let a GPU batch contain $m$ BnB nodes $\calN_1,\ldots,\calN_m$.
We store the coefficient iterates columnwise as
\begin{align}
    \bB=[\bbeta^{(1)},\ldots,\bbeta^{(m)}]\in\bbR^{p\times m},
\end{align}
where $\bbeta^{(b)}$ denote the coefficient vector of node $\calN_b$.
Collectively, we are trying to minimize the sum of $m$ independent relaxation objectives:
\begin{align}
    \Phi_{\mathrm{batch}}(\bB)
    :=
    \sum_{b=1}^m
    \Phi_{\calN_b}(\bbeta^{(b)})
    =
    \underbrace{\sum_{b=1}^m F(\bX\bbeta^{(b)})}_{\mathcal F_{\mathrm{batch}}(\bB)}
    +
    \underbrace{\sum_{b=1}^m G_{\calN_b}(\bbeta^{(b)})}_{\mathcal G_{\mathrm{batch}}(\bB)}.
\end{align}
A matrix form of proximal gradient descent, analogous to~\eqref{eq:pgd_step_vector_form}, is
\begin{align}
    \bB
    \leftarrow
    \operatorname{prox}_{\eta \mathcal G_{\mathrm{batch}}}
    \!\left(
        \bB-\eta\nabla_{\bB}\mathcal F_{\mathrm{batch}}(\bB)
    \right).
    \label{eq:pgd_step_matrix_form}
\end{align}

\paragraph{Calculate gradients with uniform structure}
To calculate $\nabla_{\bB}\mathcal F_{\mathrm{batch}}(\bB)$, it boils down to matrix-matrix operations and element-wise operations on the loss function.
For clarity, we write the loss function $f$ explicitly in the separable form
\[
    f(\bs,\by)=\sum_{i=1}^n \ell(s_i,y_i),
\]
where $\bs=\bX\bbeta$ is the linear predictor and $\ell$ is the scalar loss for one observation.
Now, we can compute $\nabla_{\bB}\mathcal F_{\mathrm{batch}}(\bB)$ as
\begin{align}
    \nabla_{\bB}\mathcal F_{\mathrm{batch}}(\bB)=\bX^\top\bR,\qquad \text{where} \qquad
    R_{i,b}=\partial_s\ell(s,y_i)\big|_{s=S_{i,b}} \quad \text{and} \quad
    \bS=\bX\bB.
    \label{eq:gradient_matrix_form}
\end{align}
Formula~\eqref{eq:gradient_matrix_form} makes the gradient step naturally GPU-friendly.
The products $\bX\bB$ and $\bX^\top\bR$ are matrix--matrix multiplications and can be delegated to optimized general matrix--matrix multiplication (GEMM) routines, while $\bR$ is obtained by applying the scalar derivative $\partial_s\ell$ independently to each entry.
Thus, the gradient computation follows a straightforward SIMD (same instruction, multiple data) pattern, even when the batch contains nodes from different parts of the BnB tree.
The same regularity is not available when evaluating the proximal operator of $\calG_{\text{batch}}$.

\paragraph{Evaluating proximal operators with non-uniform node structure}
The proximal step is column-separable, but it is not uniform across the batch because each node can have different fixed-in, fixed-out, and free coordinates.
With $\bU:=\bB-\eta\nabla_{\bB}\calF_{\mathrm{batch}}(\bB)$ and $\rho:=(2\eta\lambda_2)^{-1}$, we have
\[
    \bB
    \leftarrow
    \bU-\rho^{-1}\operatorname{prox}_{\rho \mathcal G_{\mathrm{batch}}^*}(\rho\bU)
    \quad
    \Longleftrightarrow
    \quad
    \bbeta^{(b)}
    \leftarrow
    \bu^{(b)}-\rho^{-1}\operatorname{prox}_{\rho g_{\calN_b}^*}(\rho\bu^{(b)})
    \; \forall \; b=1,\ldots,m.
\]
The difficulty is that the valid coordinates and reduced cardinality budgets differ by column.
On the one hand, if we run the sorting--PAVA routine of~\citet{liu2026gpufriendlylinearlyconvergentfirstorder} one column at a time, we would underuse the GPU.
On the other hand, if we hand-write a custom kernel for the entire per-column sorting--PAVA routine, we would lose the opportunity to use existing optimized batched sorting routines.
\input{figures/padded_column_sorting_example.tex}

To overcome this problem, we pad each column before sorting.
Coordinates outside $\calJ_f(\calN_b)$ receive sentinel keys such as $-\infty$, so all columns have the same apparent length $p$ and the expensive sort can now use optimized batched GPU routines; Figure~\ref{fig:padded_column_sorting} illustrates this idea.
Only the remaining node-specific work is handled by lightweight custom kernels: passing free counts, reduced budgets, and sorted free magnitudes to PAVA, then scattering the resulting free-coordinate values back and applying the Moreau's identity formula.
This gives the same proximal update as the node-by-node algorithm, but moves the expensive sorting step into a batched GPU operation.
Appendix~\ref{appendix_sec:more_efficient_PAVA} describes a slightly more efficient PAVA algorithm than proposed in~\citet{liu2025scalable} by exploiting the sorted structure.

\subsection{Rounding, re-optimization, variable selection, and branching}

A key limitation of~\citet{liu2026gpufriendlylinearlyconvergentfirstorder} is that feasible solutions are obtained by a CPU-based beam search.
We instead use the relaxed coefficient vector $\bbeta^\star$ itself to perform rounding and choose branching variables for $\bz$.
The justification is that the relaxed indicator vector $\bz^\star$ can be recovered from $\bbeta^\star$ without solving another optimization problem.

\begin{theorem}[Recovering relaxed indicators from relaxed coefficients]
\label{thm:recover_relaxed_indicators}
Fix a BnB node $\calN$ and let $\bbeta^\star$ solve problem~\eqref{eq:node_composite}.
Let $\bar k=k-|\calJ_1(\calN)|$ and $p_f=|\calJ_f(\calN)|$.
Set $z_j^\star=0$ for $j\in\calJ_0(\calN)$ and $z_j^\star=1$ for $j\in\calJ_1(\calN)$.
On the free set $\calJ_f(\calN)$, if $\bar k=0$, set $z_j^\star=0$.
If at most $\bar k$ free coefficients are nonzero, set $z_j^\star=1$ for nonzero $\beta_j^\star$ and $z_j^\star=0$ for zero $\beta_j^\star$.
Otherwise, sort the free magnitudes as
$|\beta_{\pi(1)}^\star|\ge\cdots\ge|\beta_{\pi(p_f)}^\star|$.
Find an index $s\in\{0,\ldots,\bar k-1\}$ such that
\begin{equation}
\label{eq:z_recovery_threshold_step}
    \tau
    :=
    \frac{\sum_{r=s+1}^{p_f}|\beta_{\pi(r)}^\star|}{\bar k-s},
    \qquad
    |\beta_{\pi(s)}^\star|\ge \tau \ge |\beta_{\pi(s+1)}^\star|,
\end{equation}
with the convention $|\beta_{\pi(0)}^\star|=+\infty$.
Then set
\begin{equation}
\label{eq:z_recovery_z_step}
    z_{\pi(r)}^\star
    =
    \begin{cases}
        1, & r\le s,\\
        |\beta_{\pi(r)}^\star|/\tau, & r>s,
    \end{cases}
    \qquad r=1,\ldots,p_f.
\end{equation}
This gives an optimal relaxed indicator vector $\bz^\star$ paired with $\bbeta^\star$.
\end{theorem}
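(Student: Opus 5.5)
The plan is to reduce the claim to a separable convex problem in $\bz$ alone, with $\bbeta^\star$ held fixed.

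\textbf{Reduction.} By definition, $\Phi_{\calN}(\bbeta)=F(\bX\bbeta)+2\lambda_2 g_{\calN}(\bbeta)$, and $g_{\calN}$ is an infimum over $\bz$. So the pair $(\bbeta^\star,\bz^\star)$ is optimal for the joint perspective relaxation exactly when $\bbeta^\star$ minimizes $\Phi_{\calN}$ and $\bz^\star$ attains the infimum defining $g_{\calN}(\bbeta^\star)$. Because $\Phi_{\calN}(\bbeta^\star)<\infty$, that infimum is taken over a nonempty set. Hence the following feasibility facts hold: $\beta^\star_j=0$ on $\calJ_0(\calN)$, $|\beta^\star_j|\le M$ everywhere, and $\sum_{j\in\calJ_f(\calN)}|\beta^\star_j|/M\le\bar k$. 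The fixed coordinates are forced, so it remains to solve
\begin{align*}
\min_{\bz}\ \sum_{j\in\calJ_f(\calN)}\frac{(\beta^\star_j)^2}{z_j}
\quad\text{s.t.}\quad
\frac{|\beta^\star_j|}{M}\le z_j\le 1,\qquad
\sum_{j\in\calJ_f(\calN)} z_j\le\bar k .
\end{align*}
This is a convex separable problem with one coupling constraint.

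\textbf{Easy cases.} If $\bar k=0$, the budget forces $z_j=0$ on the free set. If at most $\bar k$ free coefficients are nonzero, then each term satisfies $\beta_j^2/z_j\ge\beta_j^2$ because $z_j\le 1$. Choosing $z_j=1$ on the nonzeros and $z_j=0$ elsewhere is feasible and attains this lower bound, so it is optimal.

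\textbf{Main case (more than $\bar k$ free nonzeros).} I would verify the KKT conditions with a multiplier $\nu=\tau^2$ on the budget constraint. The stationarity condition $-\beta_j^2/z_j^2+\nu=0$, clipped to the box, gives
\[
z_j=\min\bigl\{1,\max\{|\beta_j|/M,\ |\beta_j|/\tau\}\bigr\}.
\]
The budget must be tight: if it were slack, every nonzero $z_j$ could increase to $1$, but more than $\bar k$ of them cannot all equal $1$. The candidate \eqref{eq:z_recovery_z_step} is exactly this form, provided two facts hold.

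\begin{enumerate}
\item[(a)] \emph{An index $s$ satisfying \eqref{eq:z_recovery_threshold_step} exists.} Define $\tau_s=\sum_{r>s}|\beta_{\pi(r)}|/(\bar k-s)$. At $s=0$ the left inequality holds by the convention $|\beta_{\pi(0)}^\star|=+\infty$. If the right inequality fails at $s$, i.e.\ $|\beta_{\pi(s+1)}|>\tau_s$, then a short averaging computation gives $\tau_{s+1}\le\tau_s<|\beta_{\pi(s+1)}|$, so the left inequality holds at $s+1$. The process must stop before $s=\bar k$, because at least $\bar k+1$ free coordinates are nonzero, which keeps $\tau_s>0$ and finite. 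This is the usual water-filling argument.

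\item[(b)] \emph{The lower bound $|\beta_j|/M$ is inactive, i.e.\ $\tau\le M$.} If $s\ge1$, then $\tau\le|\beta_{\pi(s)}|\le M$. If $s=0$, then $\tau=\sum_{j\in\calJ_f(\calN)}|\beta_j|/\bar k\le M$ by the feasibility inequality from the reduction.
\end{enumerate}

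With (a) and (b) in hand, the remaining KKT checks are routine. For $r>s$ we have $z_{\pi(r)}\le 1$ because $\tau\ge|\beta_{\pi(s+1)}|$. For $r\le s$ the upper bound $z=1$ is active with a nonnegative multiplier because $|\beta_{\pi(r)}|\ge\tau$. Complementary slackness holds since $\sum_j z_j=s+(\bar k-s)=\bar k$. Coordinates with $\beta_j=0$ get $z_j=0$, which is consistent with the perspective convention.

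I expect step (b), ensuring the $M$-lower bound does not bind, to be the main subtlety. It is exactly where the optimality, and hence finiteness, of $\bbeta^\star$ enters. The existence of $s$ in (a) is the other place needing care, specifically the monotonicity of $\tau_s$.
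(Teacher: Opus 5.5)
Your proof is correct. It shares the paper's skeleton: reduce to the separable $\bz$-subproblem at fixed $\bbeta^\star$, handle the non-binding budget separately, then certify the binding case by KKT with a budget multiplier proportional to $\tau^2$. The binding case, however, is organized differently.

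\textbf{The paper's route.} It argues first in the necessary direction. It shows $\nu>0$ and derives $z_j=|\beta_j^\star|/\tau$ for interior coordinates. It shows that a lower-bound-active coordinate forces $\tau\ge M$, and concludes the capped form $z_j=\min\{1,|\beta_j^\star|/\tau\}$ with $\tau\in(0,M]$. It then gets a threshold from the intermediate value theorem applied to $\tau\mapsto\sum_{j\in\calJ_f(\calN)}\min\{1,|\beta_j^\star|/\tau\}$, using $\sum_{j\in\calJ_f(\calN)}|\beta_j^\star|/M\le\bar k$ at the endpoint $\tau=M$. The translation from that continuous root to an admissible index $s\in\{0,\ldots,\bar k-1\}$ is sketched rather than proved.

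\textbf{Your route.} You work purely in the sufficient direction and certify the explicit candidate. You prove existence of $s$ directly in the discrete form the statement uses, via the monotone water-filling recursion on $\tau_s$. You obtain $\tau\le M$ by the case split $s\ge1$ versus $s=0$, which is exactly where the same feasibility inequality enters.

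\textbf{What each buys.} Your argument is shorter, matches the theorem as stated, and shows that every admissible $s$ yields an optimal $\bz^\star$. It also doubles as the linear scan one would implement. The paper's necessity analysis explains why the capped shape is forced, which the theorem itself does not require. Your easy-case certificate $\beta_j^2/z_j\ge\beta_j^2$ is also a slightly cleaner replacement for the paper's monotonicity remark.

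\textbf{One small repair in (a).} The scan does not stop by $s=\bar k-1$ because $\tau_s$ stays positive and finite. It stops because at $s=\bar k-1$ the right inequality reads $\sum_{r\ge\bar k}|\beta^\star_{\pi(r)}|\ge|\beta^\star_{\pi(\bar k)}|$, which always holds. The hypothesis $|\beta^\star_{\pi(\bar k+1)}|>0$ is what guarantees $\tau>0$, so that $z^\star_{\pi(r)}=|\beta^\star_{\pi(r)}|/\tau$ is well defined.
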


\noindent The proof is given in Appendix~\ref{app:proof_recover_relaxed_indicators}.
In practice, we can only obtain an approximate solution $\hat{\bbeta}$ instead of the exact optimal solution $\bbeta^\star$.
However, we can still apply equations~\eqref{eq:z_recovery_threshold_step} and~\eqref{eq:z_recovery_z_step} to obtain an approximate solution $\hat{\bz}$.

We can actually use the theorem to do rounding and branching without explicitly obtaining $\bz^\star$ or $\hat{\bz}$.
For rounding, we take $\widehat{\calS}_{\calN}=\calJ_1(\calN)\cup\calT_{\bar k}$, where $\calT_{\bar k}$ contains the $\bar k$ largest values of $|\beta_j^\star|$ over $j\in\calJ_f(\calN)$, and fix the corresponding support indicators to $1$ and the remaining free support indicators to $0$.
This support contains the $k$ largest recovered relaxed indicators of $\bz^\star$.
For branching, we choose $j^\star\in\argmax_{j\in\calJ_f(\calN)}|\beta_j^\star|$, which is also the free coordinate with the largest recovered indicator value.
Thus, support restriction and branching can both be implemented directly from $\bbeta^\star$.

After selecting $\widehat{\calS}_{\calN_b}$ by rounding, coefficients are re-optimized over this support with all other coefficients fixed to zero and $\|\widetilde{\bbeta}^{(b)}\|_\infty\le M$.
We can perform this re-optimization using any proximal gradient method, where the proximal step is a projection onto $[-M,M]$.
Let $I_r^{(b)}$ be the $r$th selected feature for node $b$.
The batched gradient and predictor computations are
\[
    \left(\nabla\widetilde{\phi}_b(\widetilde{\bbeta}^{(b)})\right)_r
    =
    \sum_{i=1}^n X_{i,I_r^{(b)}}R_{i,b}
    +
    2\lambda_2\widetilde{\beta}^{(b)}_r,\quad
    R_{i,b}=\partial_s\ell(s,y_i)\big|_{s=S_{i,b}},\quad
    S_{i,b}=\sum_{r=1}^k X_{i,I_r^{(b)}}\widetilde{\beta}^{(b)}_r.
\]
These are gather-and-reduce operations over features, samples, and batch columns, so rounding, re-optimization, and branching can all run on GPUs without returning to computations on the CPU side.

\subsection{Dual solutions, primal objectives, and dual objectives}

After a batched proximal-gradient update produces $\bB=[\bbeta^{(1)},\ldots,\bbeta^{(m)}]$, let $\bZeta=[\bzeta^{(1)},\ldots,\bzeta^{(m)}]\in\bbR^{n\times m}$ be the matrix of batched dual variables, and let $\bQ=[\bq^{(1)},\ldots,\bq^{(m)}]$ be the matrix of batched input to $g_{\calN_b}^*(\cdot)$ in~\eqref{eq:prelim_node_conjugate}.
Motivated by the vector construction in~\eqref{eq:prelim_fenchel_dual},~\eqref{eq:prelim_node_conjugate}, and~\eqref{eq:dual_candidate_vector_form}, we can construct the batched quantities in matrix forms:
\begin{align}
    \bS=\bX\bB,
    \qquad
    R_{i,b}=\partial_s\ell(s,y_i)\big|_{s=S_{i,b}},
    \qquad
    \bZeta=-\bR,
    \qquad
    \bQ=\frac{1}{2\lambda_2}\bX^\top\bZeta.
    \label{eq:batched_dual_candidate_matrix_form}
\end{align}
These calculations enjoy the same uniform structure as the gradient calculation in~\eqref{eq:gradient_matrix_form}: $\bX\bB$ and $\bX^\top\bZeta$ are GEMMs, while $\bR$ and $\bZeta$ are computed by entrywise GPU kernels.

Let $\bPhi(\bB):=\left(\Phi_{\calN_1}(\bbeta^{(1)}),\ldots,\Phi_{\calN_m}(\bbeta^{(m)})\right)^\top$ and $\bm{\Psi}(\bZeta):=\left(\Psi_{\calN_1}(\bzeta^{(1)}),\ldots,\Psi_{\calN_m}(\bzeta^{(m)})\right)^\top$ be the vectors of batched primal and dual objectives, where
\begin{align}
    [\bPhi(\bB)]_b
    &=
    \sum_{i=1}^n \ell(S_{i,b},y_i)
    +
    2\lambda_2 g_{\calN_b}(\bbeta^{(b)}),
    \label{eq:batched_primal_value_component}\\
    [\bm{\Psi}(\bZeta)]_b
    &=
    -\sum_{i=1}^n \ell^*(-\zeta_{i,b},y_i)
    -
    2\lambda_2 g_{\calN_b}^*(\bq^{(b)}).
    \label{eq:batched_dual_value_component}
\end{align}
Both objectives can be evaluated in parallel on GPUs.
The smooth terms $\ell(S_{i,b},y_i)$ and $\ell^*(-\zeta_{i,b},y_i)$ are entrywise loss evaluations followed by column reductions.
The node-dependent terms $g_{\calN_b}(\bbeta^{(b)})$ and $g_{\calN_b}^*(\bq^{(b)})$ are less uniform because each node $\calN_b$ imposes different constraints on $\bz$.
As with the proximal operator evaluation, padding smooths out these irregularities so that we can apply batched sorting.
After sorting, lightweight custom kernels construct the majorization vector needed by Algorithm~1 of~\citet{liu2026gpufriendlylinearlyconvergentfirstorder} to evaluate $g_{\calN_b}(\bbeta^{(b)})$, and separately perform $\TopSum_{\bar k(\calN_b)}$ needed to evaluate $g_{\calN_b}^*(\bq^{(b)})$.

The same batched formulation also extends naturally to multi-GPU settings; Appendix~\ref{appendix_sec:multi_gpu_distributed_computing} describes both node-parallel and row-distributed variants.

\subsection{Rashomon-set collection}

For a threshold $\epsilon\ge0$, the support-level sparse GLM Rashomon set is
\[
    \calR_\epsilon^{\mathrm{supp}}
    =
    \left\{
    S\subseteq[p]:
    |S|\le k,\ 
    v(S)\le (1+\epsilon)\Phi^\star
    \right\},
\]
where $v(S) = \min_{\operatorname{supp}(\bbeta)\subseteq S,\ \|\bbeta\|_\infty\le M} F(\bX\bbeta)+\lambda_2\|\bbeta\|_2^2$ is the optimal loss on a given support, and $\Phi^\star$ is the optimal value of~\eqref{eq:intro_sparse_glm}.
We use a support-level definition because it gives a finite collection of near-optimal sparse GLMs.

The same BnB tree can collect this set by changing only the pruning threshold.
In ordinary optimization, node $\calN$ is pruned when its safe lower bound satisfies $\mathrm{LB}_{\calN}\ge\mathrm{UB}^{\mathrm{global}}$, where $\mathrm{UB}^{\mathrm{global}}$ is the incumbent loss.
For Rashomon collection, we instead prune node $\calN$ only when
\[
    \mathrm{LB}_{\calN}
    >
    \tau_\epsilon^{\mathrm{RSet}},
    \quad
    \text{where}
    \quad
    \tau_\epsilon^{\mathrm{RSet}}
    =
    (1+\epsilon)\mathrm{UB}^{\mathrm{global}}.
\]
Whenever re-optimization returns a feasible model with objective less than or equal to $\tau_\epsilon^{\mathrm{RSet}}$, we store its support and coefficients.
As the incumbent improves, $\tau_\epsilon^{\mathrm{RSet}}$ decreases and the stored pool is filtered.
At termination, $\mathrm{UB}^{\mathrm{global}}=\Phi^\star$, so the remaining pool is the certified support-level Rashomon set.

In practice, an overly large $\epsilon$ can make $\calR_\epsilon^{\mathrm{supp}}$ too large to enumerate.
We therefore also allow collecting only the best $N$ solutions in the Rashomon set.
Let $\hat v_{(N)}$ be the objective value of the current $N$th-best stored support, with $\hat v_{(N)}=+\infty$ before $N$ supports have been found.
The active pruning threshold becomes
\[
    \tau_{\epsilon,N}^{\mathrm{RSet}}
    =
    \min\left\{
    (1+\epsilon)\mathrm{UB}^{\mathrm{global}},
    \hat v_{(N)}
    \right\}.
\]
The pool keeps only the $N$ best supports found so far.
If the full $\epsilon$-Rashomon set contains at most $N$ supports, this cap has no effect and the method still certifies the complete set.
If the cap is active, termination certifies the best $N$ solutions in the support-level Rashomon set, rather than the entire $\epsilon$-Rashomon set.
Appendix~\ref{sec:compact_rashomon_pool_storage} gives the compact trie-and-offset storage method.

%% file: figures/gpu_bnb_blueprint.tex
\begin{figure}[!h]
\centering
\begin{tikzpicture}[
  x=1cm,
  y=1cm,
  >=Stealth,
  every node/.style={font=\small},
  block/.style={draw, rounded corners=2pt, align=center, minimum width=4.5cm, minimum height=0.95cm, inner sep=5pt},
  hostblock/.style={block, fill=gray!10},
  gpublock/.style={block, fill=gray!10}
]
  \node[font=\bfseries\small] at (0,3.75) {Host / CPU};
  \node[font=\bfseries\small] at (7.4,3.75) {Device / GPU};

  \node[hostblock] (queue) at (0,2.8) {
    \begin{tabular}{@{}c@{}}
      \textbf{BnB node management:}\\
      \begin{tabular}{@{}l@{}}
        (1) insert and reorder nodes in the queue
      \end{tabular}
    \end{tabular}
    };
  \node[hostblock] (assemble) at (0,1.0) {
    \begin{tabular}{@{}c@{}}
      \textbf{Pre-GPU processing:}\\
      \begin{tabular}{@{}l@{}}
        (1) assemble a batch of open nodes\\
        (warm starts, constraints, etc.)
      \end{tabular}
    \end{tabular}
    };
  \node[hostblock] (post) at (0,-1.0) {
    \begin{tabular}{@{}c@{}}
      \textbf{Post-GPU processing:}\\
      \begin{tabular}{@{}l@{}}
        (1) update $\mathrm{UB}^{\mathrm{global}}$ and $\text{LB}^{\text{global}}$\\
        (2) select and branch on a $z_j$\\
        (3) prune and generate child nodes
      \end{tabular}
    \end{tabular}
    };

  \node[gpublock] (lower) at (7.4,1.9) {
    \begin{tabular}{@{}c@{}}
      \textbf{Batched lower-bound solve:}\\
      \begin{tabular}{@{}l@{}}
        (1) apply first-order methods\\
        (2) calculate safe lower bounds
      \end{tabular}
    \end{tabular}
    };
  \node[gpublock] (reopt) at (7.4,-0.6) {
    \begin{tabular}{@{}c@{}}
      \textbf{Batched feasible-solution search:}\\
      \begin{tabular}{@{}l@{}}
        (1) round fractional $\bz$ to integers\\
        (2) apply first-order methods to re-optimize $\bbeta$
      \end{tabular}
    \end{tabular}
    };

  \draw[->, thick] (queue.south) -- ++(0,-0.32) -- (assemble.north);
  \draw[->, thick] (assemble.east) -- (lower.west);
  \draw[->, thick] (lower.south) -- ++(0,-0.32) -- (reopt.north);
  \draw[->, thick] (reopt.west) -- (post.east);
  \draw[->, thick, rounded corners=10pt] (post.west) -- ++(-1.0,0) |- (queue.west);
\end{tikzpicture}
\caption{
    Hybrid CPU--GPU BnB framework for mixed-integer nonlinear programs, where $\bz$ are integer variables and $\bbeta$ are continuous variables.
    }
\label{fig:gpu_bnb_blueprint}
\end{figure}

%% file: figures/padded_column_sorting_example.tex
\begin{figure}[!ht]
\centering
\begin{tikzpicture}[
    every node/.style={font=\fontsize{9.6pt}{11.5pt}\selectfont},
    panel/.style={
        draw,
        rounded corners=2pt,
        fill=gray!6,
        inner sep=4pt,
        align=center
    },
    title/.style={font=\bfseries\fontsize{9.6pt}{11.5pt}\selectfont}
]
    \node[panel] (sets) at (0.05,2.75) {
        \begin{tabular}{c}
            \textbf{Example batch}: $p=6$, $m=3$, $k=3$\\[2pt]
            \begin{tabular}{c|c|c|c}
                node & $\calJ_0$ & $\calJ_1$ & $\calJ_f$\\
                \hline
                $\calN_1$ & $\{5\}$ & $\{2\}$ & $\{1,3,4,6\}$\\
                $\calN_2$ & $\{2,6\}$ & $\{1,4\}$ & $\{3,5\}$\\
                $\calN_3$ & $\{3\}$ & $\emptyset$ & $\{1,2,4,5,6\}$
            \end{tabular}
        \end{tabular}
    };

    \node[panel] (pad) at (-3.35,-0.25) {
        \begin{tabular}{c}
            \textbf{Padded magnitudes}\\[2pt]
            $\displaystyle
            \bK^{\mathrm{pad}}
            =
            \begin{bmatrix}
                |u_{1,1}| & -\infty & |u_{1,3}| \\
                -\infty   & -\infty & |u_{2,3}| \\
                |u_{3,1}| & |u_{3,2}| & -\infty \\
                |u_{4,1}| & -\infty & |u_{4,3}| \\
                -\infty   & |u_{5,2}| & |u_{5,3}| \\
                |u_{6,1}| & -\infty & |u_{6,3}|
            \end{bmatrix}
            $
        \end{tabular}
    };

    \node[panel] (sort) at (3.45,-0.25) {
        \begin{tabular}{c}
            \textbf{After descending column sort}\\[2pt]
            $\displaystyle
            \bK^{\mathrm{sort}}
            =
            \begin{bmatrix}
                |u_{4,1}| & |u_{5,2}| & |u_{6,3}| \\
                |u_{1,1}| & |u_{3,2}| & |u_{1,3}| \\
                |u_{6,1}| & -\infty & |u_{4,3}| \\
                |u_{3,1}| & -\infty & |u_{2,3}| \\
                -\infty & -\infty & |u_{5,3}| \\
                -\infty & -\infty & -\infty
            \end{bmatrix}
            $
        \end{tabular}
    };
\end{tikzpicture}
\caption{
    Example of padding and column sorting for a batched proximal evaluation.
    Non-free coordinates receive the sentinel value $-\infty$, so a standard descending column sort pushes them below the valid free magnitudes.
}
\label{fig:padded_column_sorting}
\end{figure}

%% file: sections/5_Experiments.tex
\section{Experiments}\label{sec:experiments}

We design the experiments to answer four questions:
(a) how fast is the proposed GPU-parallel BnB solver relative to existing methods for~\eqref{eq:intro_sparse_glm}?
(b) how does the batch size affect total BnB runtime?
(c) how is the runtime distributed across lower-bound computation, feasible-solution search, CPU--GPU data transfer, node generation, and queue management?
(d) how can the collected Rashomon set support variable-importance analysis and model selection?
Results for the latter two questions (c and d) are reported in Appendix~\ref{appendix_sec:additional_experimental_results}.

We compare against commercial and open-source MIP solvers for cardinality-constrained linear and logistic regression.
The baselines are Gurobi~\citep{gurobi}, MOSEK~\citep{mosek}, and OKGLM~\citep{liu2025scalable,liu2026gpufriendlylinearlyconvergentfirstorder}.
OKGLM is the current state-of-the-art open-source implementation: it processes one BnB node at a time, computes lower bounds on the GPU, and selects feasible solutions and branching variables on the CPU.
Detailed experimental settings are given in Appendix~\ref{appendix_sec:experimental_setups}.

\subsection{How Fast Is GPU-Parallel BnB?}

This experiment compares running time, optimality gap, and total number of BnB nodes on both synthetic and real-world instances.
The synthetic benchmark contains highly correlated ($\rho=0.9$) linear- and logistic-regression problems, while the real-world benchmark uses two high-dimensional datasets ($n\ll p$): Santander for linear regression and DOROTHEA for logistic regression.

\input{results/neurips_2026_main_speed_comparison_table}
\input{results/neurips_2026_realworld_speed_table}

Tables~\ref{tab:neurips_main_speed_comparison} and~\ref{tab:neurips_realworld_speed} show that our method is the only method that certifies zero optimality gap on every reported instance.
On synthetic linear-regression problems, our method is consistently the fastest method, reducing the runtime of the serial GPU baseline OKGLM by roughly one order of magnitude and outperforming the commercial MIP solvers by much larger margins on the high-dimensional cases.
The gains are even more pronounced for synthetic logistic regression: Gurobi, MOSEK, and OKGLM~\footnote{OKGLM uses early stopping in some lower-bound solves, which can lead to larger final optimality gaps than Gurobi and MOSEK.} frequently hit the time limit or run out of memory, while our method certifies all instances within the time limit.
For the hardest synthetic logistic case with $p=500$, our method processes $3.76$ million BnB nodes and closes the gap in $4{,}348$ seconds, whereas OKGLM reaches the time limit after processing only $45{,}117$ nodes and still has a large gap.
On the real-world Santander instances, our method is again uniformly faster than OKGLM and closes all gaps.
On DOROTHEA, OKGLM remains competitive for the easiest cases where the tree is very small, but our method becomes faster as $k$ increases and the BnB search becomes large enough for batching to amortize GPU and queue-management overheads.

\subsection{How Does Batch Size Affect BnB Runtime?}

We next study how the GPU batch size affects total certification time.
Using synthetic linear and logistic instances with $n=p=1{,}000$ and $\rho=0.9$, we run only our method and vary the number of BnB nodes processed together in each lower-bound computation/re-optimization batch.
The goal is to measure how larger batches improve GPU throughput.

Figure~\ref{fig:neurips_batch_size_sweep} shows a clear batching effect.
For both linear and logistic regression, increasing the batch size sharply reduces certification time at first, indicating that many BnB nodes can be processed together before GPU throughput becomes saturated.
On the log--log plot, the early part of each curve is close to linear, meaning that doubling the batch size gives an approximately multiplicative runtime reduction.
The benefit eventually saturates: for linear regression the curve flattens around batch size $2^{10}$, while for logistic regression it flattens around batch size $2^{15}$.
This plateau is expected because very large batches cannot always be filled by the current open-node queue, and because kernel throughput, memory traffic, and search adaptivity no longer scale linearly once the GPU workload is already sufficiently large.

\begin{figure}[!t]
\centering
\includegraphics[width=0.9\linewidth]{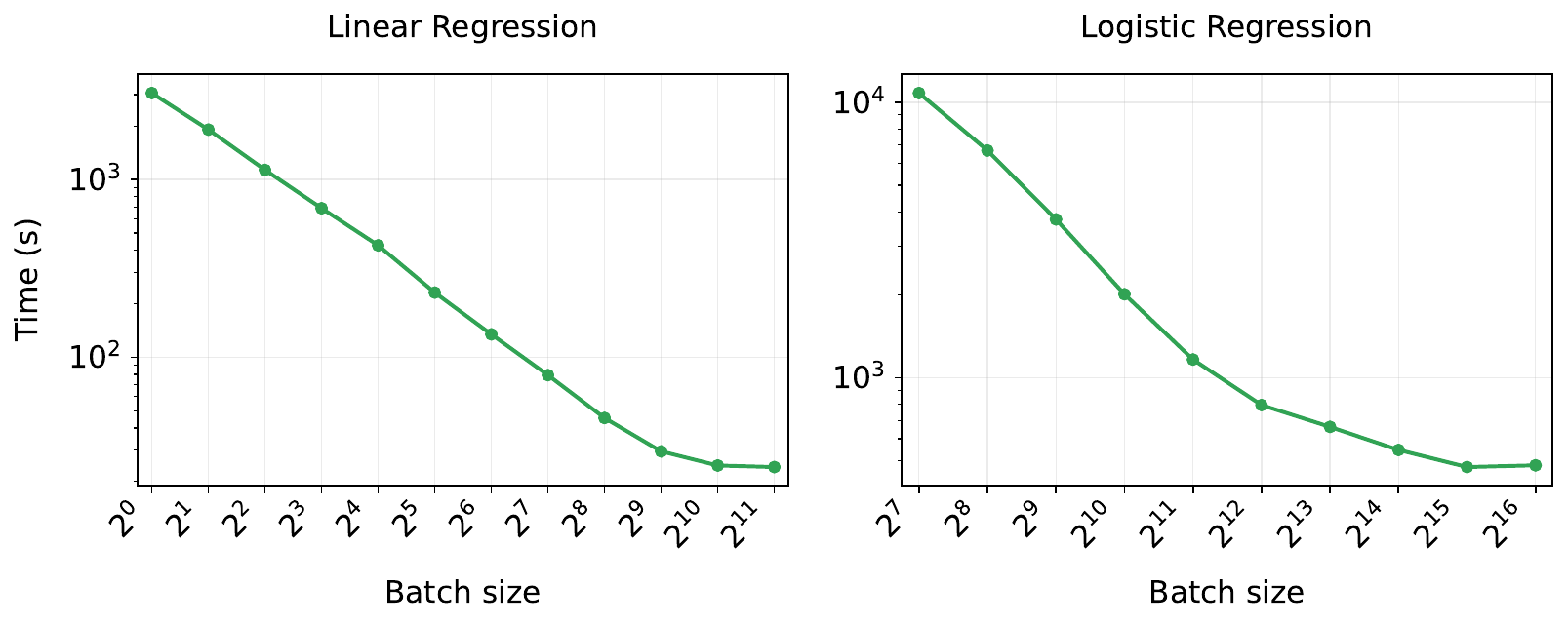}
\caption{Effect of batch-size for GPU-parallel BnB for our method. Results are on synthetic datasets with feature correlation $\rho=0.9$, $n=p=1{,}000$, $k=10$, $\lambda_2=1.0$, and $M=2.0$}
\label{fig:neurips_batch_size_sweep}
\end{figure}

%% file: results/neurips_2026_main_speed_comparison_table.tex
\begin{table}[!t]
\centering
\caption{Results on synthetic datasets with feature correlation $\rho=0.9$, $n=p$, $k=10$, $\lambda_2=1.0$, and $M=2.0$. \texttt{TL} and \texttt{OOM} denote time-limit (10800s) and out-of-memory outcomes.}
\label{tab:neurips_main_speed_comparison}
\resizebox{\linewidth}{!}{%
\begin{tabular}{l*{4}{rrr}}
\toprule
$p$ & \multicolumn{3}{c}{Gurobi} & \multicolumn{3}{c}{MOSEK} & \multicolumn{3}{c}{OKGLM} & \multicolumn{3}{c}{Ours} \\
\cmidrule(lr){2-4} \cmidrule(lr){5-7} \cmidrule(lr){8-10} \cmidrule(lr){11-13}
 & Time (s) & Gap (\%) & Nodes & Time (s) & Gap (\%) & Nodes & Time (s) & Gap (\%) & Nodes & Time (s) & Gap (\%) & Nodes \\
\midrule
\multicolumn{13}{l}{\textit{Synthetic (Linear regression)}} \\
16K & \texttt{TL} & 100 & 1 & \texttt{OOM} & -- & -- & 228.8 & 0.00 & 173 & \textbf{30.6} & 0.00 & 195 \\
8K & \texttt{TL} & 100 & 1 & \texttt{TL} & 13.78 & 6 & 109.5 & 0.00 & 263 & \textbf{15.1} & 0.00 & 257 \\
4K & 9717.0 & 0.00 & 645 & 10498.0 & 0.00 & 419 & 87.3 & 0.00 & 451 & \textbf{16.6} & 0.00 & 565 \\
2K & 3198.0 & 0.00 & 2,732 & 7858.0 & 0.00 & 2,091 & 464.4 & 0.00 & 3,045 & \textbf{20.7} & 0.00 & 2,291 \\
1K & 1845.0 & 0.00 & 7,842 & 2783.0 & 0.00 & 6,073 & 1122.0 & 0.00 & 8,121 & \textbf{24.0} & 0.00 & 7,065 \\
500 & 264.2 & 0.00 & 6,677 & 454.9 & 0.00 & 5,147 & 1065.0 & 0.00 & 8,409 & \textbf{22.2} & 0.00 & 6,089 \\
\midrule
\multicolumn{13}{l}{\textit{Synthetic (Logistic regression)}} \\
16K & \texttt{TL} & 32.52 & 55,063 & \texttt{OOM} & -- & -- & 7790.0 & 0.00 & 3,821 & \textbf{100.8} & 0.00 & 3,865 \\
8K & \texttt{TL} & 38.81 & 160,008 & \texttt{OOM} & -- & -- & \texttt{TL} & 23.63 & 10,885 & \textbf{93.5} & \textbf{0.00} & 12,939 \\
4K & \texttt{TL} & 50.05 & 552,994 & \texttt{TL} & 10.63 & 1,057 & 10361.0 & 0.00 & 25,477 & \textbf{80.4} & 0.00 & 26,861 \\
2K & \texttt{TL} & 48.42 & 518,798 & \texttt{TL} & 10.31 & 4,824 & \texttt{TL} & 55.49 & 29,691 & \textbf{160.7} & \textbf{0.00} & 122,299 \\
1K & \texttt{TL} & 41.93 & 521,731 & \texttt{TL} & 9.42 & 38,517 & \texttt{TL} & 70.07 & 38,610 & \textbf{473.5} & \textbf{0.00} & 742,719 \\
500 & \texttt{TL} & 27.54 & 821,898 & \texttt{TL} & 6.74 & 228,302 & \texttt{TL} & 69.60 & 45,117 & \textbf{4348.0} & \textbf{0.00} & 3,763,479 \\
\bottomrule
\end{tabular}%
}
\vspace{1mm}
\end{table}

%% file: results/neurips_2026_realworld_speed_table.tex
\begin{table}[!t]
\centering
\caption{Results on real-world datasets: Santander (linear regression) and DOROTHEA (logistic regression). Santander has $n=4459$, $p=4735$, $\lambda_2=1.0$, $M=10$, and $k\in\{6,7,8,9,10\}$; DOROTHEA has $n=2300$, $p=89989$, $\lambda_2=1.0$, $M=10$, and $k\in\{5,15,\ldots,45\}$. \texttt{TL} and \texttt{OOM} denote time-limit (10800s) and out-of-memory outcomes.}
\label{tab:neurips_realworld_speed}
\resizebox{\linewidth}{!}{%
\begin{tabular}{l*{4}{rrr}}
\toprule
$k$ & \multicolumn{3}{c}{Gurobi} & \multicolumn{3}{c}{MOSEK} & \multicolumn{3}{c}{OKGLM} & \multicolumn{3}{c}{Ours} \\
\cmidrule(lr){2-4} \cmidrule(lr){5-7} \cmidrule(lr){8-10} \cmidrule(lr){11-13}
 & Time (s) & Gap (\%) & Nodes & Time (s) & Gap (\%) & Nodes & Time (s) & Gap (\%) & Nodes & Time (s) & Gap (\%) & Nodes \\
\midrule
\multicolumn{13}{l}{\textit{Santander (Linear regression)}} \\
6 & \texttt{TL} & 100 & 1 & \texttt{TL} & 0.27 & 725 & 111.4 & 0.00 & 1,395 & \textbf{17.7} & 0.00 & 951 \\
7 & \texttt{TL} & 100 & 1 & \texttt{TL} & 0.43 & 705 & 195.3 & 0.00 & 2,209 & \textbf{21.7} & 0.00 & 1,693 \\
8 & \texttt{TL} & 100 & 1 & \texttt{TL} & 0.26 & 617 & 419.1 & 0.00 & 4,541 & \textbf{26.7} & 0.00 & 3,443 \\
9 & \texttt{TL} & 100 & 1 & \texttt{TL} & 0.30 & 575 & 1034.0 & 0.00 & 10,445 & \textbf{35.5} & 0.00 & 7,381 \\
10 & \texttt{TL} & 100 & 1 & \texttt{TL} & 0.52 & 473 & 3901.0 & 0.00 & 35,505 & \textbf{52.3} & 0.00 & 22,121 \\
\midrule
\multicolumn{13}{l}{\textit{DOROTHEA (Logistic regression)}} \\
5 & 868.1 & 0.00 & 938 & 1074.0 & 0.00 & 0 & \textbf{19.1} & 0.00 & 11 & 34.4 & 0.00 & 11 \\
15 & 2901.0 & 0.00 & 3,209 & \texttt{OOM} & -- & -- & 61.4 & 0.00 & 33 & \textbf{58.8} & 0.00 & 37 \\
25 & \texttt{TL} & 0.12 & 3,901 & \texttt{OOM} & -- & -- & 338.5 & 0.00 & 177 & \textbf{224.1} & 0.00 & 259 \\
35 & \texttt{TL} & 0.16 & 3,996 & \texttt{OOM} & -- & -- & 4380.0 & 0.00 & 2,271 & \textbf{904.7} & 0.00 & 2,983 \\
45 & \texttt{TL} & 0.17 & 3,847 & \texttt{OOM} & -- & -- & \texttt{TL} & 0.06 & 2,206 & \textbf{2198.0} & \textbf{0.00} & 15,873 \\
\bottomrule
\end{tabular}%
}
\end{table}

%% file: sections/6_Conclusion.tex
\section{Conclusion}\label{sec:conclusion}

We introduced a simple, generic, and modular CPU--GPU BnB framework for certifying cardinality-constrained GLMs.
The framework turns repeated node-level computations into batched GPU work by combining padding with GPU-efficient routines.
This design keeps each BnB component independent and also extends naturally to exact support-level Rashomon-set collection.
Empirically, our method closes all reported optimality gaps and achieves one-to-two orders of magnitude speedups on challenging synthetic and real-world instances.

%% file: sections/7_Appendix.tex
\section*{Appendix}\label{sec:appendix}

\section{Primer on Branch and Bound}
\label{appendix_sec:bnb_primer}

\input{sections/Appendix/bnb_primer.tex}

\section{A Slightly More Efficient PAVA Algorithm}
\label{appendix_sec:more_efficient_PAVA}

\input{sections/Appendix/more_efficient_PAVA.tex}

\clearpage
\section{Proofs}
\label{appendix_sec:proofs}

\input{sections/Appendix/proofs.tex}

\clearpage
\section{Multi-GPU Distributed Computing}
\label{appendix_sec:multi_gpu_distributed_computing}

\input{sections/Appendix/multi_gpu_distributed_computing.tex}

\clearpage
\section{Experimental Setups}
\label{appendix_sec:experimental_setups}

\input{sections/Appendix/experimental_setups.tex}

\clearpage
\section{Additional Experimental Results}
\label{appendix_sec:additional_experimental_results}

\input{sections/Appendix/additional_experiments.tex}

\clearpage
\section{Compact Rashomon-Set storage}
\label{sec:compact_rashomon_pool_storage}

\input{sections/Appendix/compact_Rashomon_pool_storage.tex}

%% file: sections/Appendix/bnb_primer.tex
A branch-and-bound tree stores partial fixing decisions on the binary support indicators $\bz$.
At each node, the solver computes a valid lower bound, compares it with the incumbent objective value, and either prunes the node or branches on another free indicator.
Figure~\ref{fig:bnb_primer} gives a visualization of this process.

\begin{figure}[!h]
    \centering
    \begin{minipage}{0.55\linewidth}
        \centering
        \begin{tikzpicture}[
            scale=0.8,
            every node/.style={font=\small},
            bnode/.style={circle, draw=blue!55!black, very thick, fill=gray!25, minimum size=7.5mm, inner sep=0pt},
            leafbad/.style={circle, draw=blue!55!black, very thick, fill=gray!10, minimum size=7.5mm, inner sep=0pt},
            leafgood/.style={circle, draw=blue!55!black, very thick, fill=green!65!black, minimum size=7.5mm, inner sep=0pt},
            edge/.style={-{Latex[length=2.0mm]}, thick}
        ]
            \node[bnode] (root) at (0,0) {};
            \node[bnode] (l1) at (-1.85,-1.05) {};
            \node[bnode] (r1) at (1.85,-1.05) {};
            \node[bnode] (l2) at (-3.05,-2.25) {};
            \node[leafbad] (l2r) at (-0.85,-2.25) {};
            \node[leafbad] (r2) at (0.85,-2.25) {};
            \node[leafgood] (r3) at (3.05,-2.25) {};
            \node[leafbad] (l3) at (-4.05,-3.75) {};
            \node[leafbad] (l4) at (-2.05,-3.75) {};

            \draw[edge] (root) -- node[above left=-1pt] {$z_2=0$} (l1);
            \draw[edge] (root) -- node[above right=-1pt] {$z_2=1$} (r1);
            \draw[edge] (l1) -- node[above left=-1pt] {$z_1=0$} (l2);
            \draw[edge] (l1) -- node[above right=-1pt] {$z_1=1$} (l2r);
            \draw[edge] (r1) -- node[above left=-1pt] {$z_7=0$} (r2);
            \draw[edge] (r1) -- node[above right=-1pt] {$z_7=1$} (r3);
            \draw[edge] (l2) -- node[above left=-1pt] {$z_5=0$} (l3);
            \draw[edge] (l2) -- node[above right=-1pt] {$z_5=1$} (l4);

            \draw[red!85, very thick] (l2r.south west) -- (l2r.north east);
            \draw[red!85, very thick] (l2r.north west) -- (l2r.south east);
            \draw[red!85, very thick] (r2.south west) -- (r2.north east);
            \draw[red!85, very thick] (r2.north west) -- (r2.south east);
            \draw[red!85, very thick] (l3.south west) -- (l3.north east);
            \draw[red!85, very thick] (l3.north west) -- (l3.south east);
            \draw[red!85, very thick] (l4.south west) -- (l4.north east);
            \draw[red!85, very thick] (l4.north west) -- (l4.south east);
        \end{tikzpicture}
    \end{minipage}
    \hfill
    \begin{minipage}{0.42\linewidth}
        \caption{
            Primer on branch and bound.
            Each open node stores partial fixing decisions on $\bz$.
            A lower bound prunes a node (with red crosses) when it cannot beat the incumbent; otherwise the node is branched into two children by fixing one free variable, either with $z_j = 0$ or $z_j = 1$
            The green node finds an incumbent, which becomes optimal when there are no open nodes.
            In this work, we process multiple nodes in batches on the GPU.
        }
        \label{fig:bnb_primer}
    \end{minipage}
\end{figure}

%% file: sections/Appendix/more_efficient_PAVA.tex
By definition, the proximal operator of $g_{\calN}^*$ is
\begin{align*}
    \hat{\balpha} = \argmin_{\balpha \in \bbR^p} \frac{1}{2}\|\balpha - \bbeta\|_2^2 + \rho \operatorname{TopSum}_k \left( H_M \left( \balpha \right) \right)
\end{align*}

~\citet{liu2026gpufriendlylinearlyconvergentfirstorder} shows that we can recast $\operatorname{prox}_{\rho g^*_{\calN}}$ as a generalized isotonic regression problem.
Without loss of generality, let us assume that $\calN$ is the root node.
We have
\begin{align*}
    \hat{\balpha} &= \operatorname{sgn} \left( \bbeta \right) \odot \pi^{-1} \left( \bv \right), \\
    \text{where} \quad \hat{\bv} &= \argmin_{\bv \in \bbR^p} \sum_{j=1}^p \frac{1}{2} \left( v_j - |\beta_{\pi\left( j \right) }| \right)^2 + \rho_j H_M \left( v_j \right)
    \quad
    \st
    \quad
    v_1 \geq v_2 \geq \ldots \geq v_p \geq 0,
\end{align*}
where $\rho_j = 1$ if $j \leq k$ and $\rho_j = 0$ if $j > k$.
Algorithm~\ref{alg:prox_of_g_conjugate_root_node} is the PAVA procedure to compute $\hat{\balpha}$.
The algorithm merges adjacent blocks according to the block up-and-down procedure in order to get rid of the violation of monotonicity constraint.
\input{figures/pava_algorithm.tex}

However, there is additional structure we can exploit to have a slightly more efficient implementation than the vanilla PAVA procedure.

Note that $\rho_j = 1$ for all $j \leq k$ and $\rho_j = 0$ for all $j > k$.
Moreover, $|\beta_{\pi \left( j \right) }|$ has already been sorted.
Therefore, when we initialize $\hat{\alpha}_j$ at Line 3 of Algorithm~\ref{alg:prox_of_g_conjugate_root_node}, we have
\begin{align*}
    \hat{\alpha}_1 \geq \hat{\alpha}_2 \geq \ldots \geq \hat{\alpha}_k \quad \text{and} \quad \hat{\alpha}_{k+1} \geq \hat{\alpha}_{k+2} \geq \ldots \geq \hat{\alpha}_p.
\end{align*}
Therefore a violation of the required nonincreasing order can only begin at the boundary between positions $k$ and $k+1$.
Thus, we can start checking the violation of the monotonicity constraint at this boundary, expand the active pooled block only when it violates the next right or left singleton value, and stop once both neighboring inequalities are satisfied.
This keeps the worst-case linear cost of PAVA, but avoids storing a full stack of pooled blocks for every new instance.
In our batched setting on the GPU, each column only needs the active interval endpoints and a few scalar block summaries.

%% file: figures/pava_algorithm.tex
\begin{algorithm}[!h]
    \DontPrintSemicolon
    \caption{Compute $\prox_{\rho g^*_{\mathcal{N}}}(\bbeta)$ at the root node $\mathcal{N}$}
    \label{alg:prox_of_g_conjugate_root_node}
    \KwData{vector $\bbeta \in \mathbb R^p$, scalar $\rho > 0$, cardinality parameter $k \in [p]$, box parameter $M > 0$}
    \KwResult{$\prox_{\rho g^*_{\mathcal{N}}}(\bbeta)$}

    Set $\brho \in \mathbb R_+^{p}$ with $\rho_j \gets \rho$ if $j \in \{1, 2, ..., k \}$ and $\rho_j \gets 0$ otherwise. \;
    Sort $\bbeta$ with permutation $\bm \pi$ of $[p]$ such that $\vert{\beta_{\pi(1)}} \geq \ldots \geq \vert{\beta_{\pi(p)}} \geq 0$\;

    \lFor{$j \gets 1$ \KwTo $p$}{
        $\hat{v}_j \gets \prox_{\rho_j H_M} \left( \beta_{\pi(j)} \right) $
    }
    $\mathcal{W} \leftarrow \{[1, 1], [2, 2]\ldots, [p, p]\}$. \;
    \tcp{Constraint violations can be checked more efficiently by expanding the active endpoints from $k$ and $k+1$ to the left and right, respectively.}
    \While{$\exists [j_1, j_2], [j_2+1, j_3] \in \mathcal{W} ~ \st ~ \hat{v}_{j_1} < \hat{v}_{j_3}$}{
        $\mathcal{W} \gets \mathcal{W} \setminus \{[j_1, j_2]\} \setminus \{[j_2+1, j_3]\} $ \;
        $\overline{\rho} \gets \left(\sum_{j=j_1}^{j_3} \rho_j \right) / \left( j_3 - j_1 + 1 \right)$ \;
        $\overline{ \xi } \gets \left(\sum_{j=j_1}^{j_3} | \beta_{\pi(j)} |\right) / \left( j_3 - j_1 + 1 \right)$\;
        $\hat{v}_{[j_1:j_3]} \gets \prox_{\overline{\rho} H_M} \left( \overline{ \xi } \right)$\;
        $ \mathcal{W} \leftarrow \mathcal{W} \cup \{[j_1, j_3]\} $\;
    }
    \KwRet{$\operatorname{sgn}(\bbeta) \odot \bpi^{-1}(\hat{\bv})$}
\end{algorithm}

%% file: sections/Appendix/proofs.tex
\subsection{Proof of Theorem~\ref{thm:recover_relaxed_indicators}}
\label{app:proof_recover_relaxed_indicators}

\begin{namedtheorem}[Theorem~\ref{thm:recover_relaxed_indicators}]
Fix a BnB node $\calN$ and let $\bbeta^\star$ solve problem~\eqref{eq:node_composite}.
Let $\bar k=k-|\calJ_1(\calN)|$ and $p_f=|\calJ_f(\calN)|$.
Set $z_j^\star=0$ for $j\in\calJ_0(\calN)$ and $z_j^\star=1$ for $j\in\calJ_1(\calN)$.
On the free set $\calJ_f(\calN)$, if $\bar k=0$, set $z_j^\star=0$.
If at most $\bar k$ free coefficients are nonzero, set $z_j^\star=1$ for nonzero $\beta_j^\star$ and $z_j^\star=0$ for zero $\beta_j^\star$.
Otherwise, sort the free magnitudes as
$|\beta_{\pi(1)}^\star|\ge\cdots\ge|\beta_{\pi(p_f)}^\star|$.
Find an index $s\in\{0,\ldots,\bar k-1\}$ such that
\[
    \tau
    :=
    \frac{\sum_{r=s+1}^{p_f}|\beta_{\pi(r)}^\star|}{\bar k-s},
    \qquad
    |\beta_{\pi(s)}^\star|\ge \tau \ge |\beta_{\pi(s+1)}^\star|,
\]
with the convention $|\beta_{\pi(0)}^\star|=+\infty$.
Then set
\[
    z_{\pi(r)}^\star
    =
    \begin{cases}
        1, & r\le s,\\
        |\beta_{\pi(r)}^\star|/\tau, & r>s,
    \end{cases}
    \qquad r=1,\ldots,p_f.
\]
This gives an optimal relaxed indicator vector paired with $\bbeta^\star$.
\end{namedtheorem}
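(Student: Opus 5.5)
Since $\bbeta^\star$ minimizes $\Phi_{\calN}$ and $\Phi_{\calN}(\bbeta)=\min_{\bz}\{F(\bX\bbeta)+\lambda_2\sum_j\beta_j^2/z_j\}$ over the node's $\bz$-feasible set, a pair $(\bbeta^\star,\bz^\star)$ is optimal for the joint perspective relaxation exactly when $\bz^\star$ attains the infimum defining $g_{\calN}(\bbeta^\star)$. So I will show that the $\bz^\star$ in the statement solves the inner problem for fixed $\bbeta=\bbeta^\star$. This is a purely separable convex problem in $\bz$, so the GLM loss plays no further role. Coordinates in $\calJ_0$ and $\calJ_1$ are forced. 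Finiteness of $\Phi_{\calN}(\bbeta^\star)$ gives $\beta^\star_j=0$ on $\calJ_0$, and $|\beta_j^\star|\le M$ on the remaining coordinates. The free part is therefore
\[
\min\Big\{\tfrac12\sum_{j\in\calJ_f}\beta_j^{\star2}/z_j:\ z_j\in[0,1],\ \textstyle\sum_{j\in\calJ_f}z_j\le\bar k,\ |\beta_j^\star|\le Mz_j\Big\}.
\]

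The two easy cases come first. If $\bar k=0$, then every free $z_j=0$, which forces $\beta_j^\star=0$ by finiteness, and the value is $0$. If at most $\bar k$ free coefficients are nonzero, then setting $z_j=1$ on the nonzero ones and $0$ elsewhere is feasible. It attains the smallest value possible, since each term $\beta_j^2/z_j\ge\beta_j^2$ when $z_j\le1$.

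The main case has more than $\bar k$ nonzero free coefficients. Here I will write the KKT conditions for the convex problem $\min\sum_j a_j^2/(2z_j)$ subject to $z_j\le1$ and $\sum_j z_j\le\bar k$, with $a_j=|\beta_j^\star|$. The box lower bound $|\beta_j|\le Mz_j$ will be checked a posteriori rather than handled with a multiplier. With multiplier $\mu\ge0$ on the budget, stationarity gives $z_j=\min\{1,a_j/\sqrt{2\mu}\}$. Setting $\tau=\sqrt{2\mu}$, the optimal $z$ has the water-filling form $z_j=\min\{1,a_j/\tau\}$.

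The budget must be tight: if $\sum_j z_j<\bar k$, some $z_j<1$ with $a_j>0$ could be increased and the objective would strictly decrease, because there are more than $\bar k$ nonzero coefficients. With $s$ the number of coordinates capped at $1$, tightness gives $s+\sum_{r>s}a_{\pi(r)}/\tau=\bar k$, which is exactly the formula for $\tau$ in~\eqref{eq:z_recovery_threshold_step}. The ordering condition $a_{\pi(s)}\ge\tau\ge a_{\pi(s+1)}$ is exactly consistency of the cap pattern. Since the objective is convex, KKT points are global minimizers, so the $\bz^\star$ in~\eqref{eq:z_recovery_z_step} is optimal.

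\textbf{Main obstacles.} Two steps are delicate.
\begin{enumerate}
\item \emph{Existence of $s$.} I will show that $h(s):=s+\sum_{r>s}a_{\pi(r)}/\tau_s$ behaves monotonically; equivalently, the standard water-filling argument shows the continuous equation $\sum_j\min\{1,a_j/\tau\}=\bar k$ has a root $\tau>0$. The map $\tau\mapsto\sum_j\min\{1,a_j/\tau\}$ is continuous and nonincreasing, exceeds $\bar k$ as $\tau\to0^+$ (more than $\bar k$ positive $a_j$), and tends to $0$ as $\tau\to\infty$. At the root, setting $s=|\{j:a_j\ge\tau\}|$, with ties resolved to match, yields $s\le\bar k-1$ and the required sandwich inequality.
\item \emph{The box constraint $|\beta_j^\star|\le Mz_j^\star$.} This holds trivially where $z_j^\star=1$. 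Where $z_j^\star=a_j/\tau$, it requires $\tau\le M$. I plan to obtain this from $a_{\pi(s+1)}\le\tau$ together with optimality of $\bbeta^\star$. If $\tau>M$, the perspective-penalty structure (the Huber threshold $M$ appearing in $g_{\calN}^*$) would allow decreasing the objective, contradicting optimality. If that argument fails, I will instead include the box multiplier in the KKT system. In that version the box constraint becomes active exactly when $a_j/\tau\ge a_j/M$, and I will argue that this can only occur with $z_j=1$ in the stated formula.
\end{enumerate}
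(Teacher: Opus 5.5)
\textbf{There is a genuine gap: the proposal never proves $\tau\le M$, and the step it leaves open is the one the theorem depends on.}

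Your overall strategy is legitimate and lighter than the paper's three-multiplier case analysis. You drop the constraint $|\beta_j^\star|\le Mz_j$. You show that $\bz^\star$ is a KKT point of the remaining separable problem, using $\nu=\tau^2/2$ and $\omega_j=(a_j^2-\tau^2)/2$ on capped coordinates. Then you check the dropped constraint afterwards. This puts all the weight on that final check, i.e.\ on $\tau\le M$, and neither route you propose establishes it.

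\begin{itemize}
\item \emph{The per-coordinate bound is not enough.} You extracted $|\beta_j^\star|\le M$ from finiteness. Take free magnitudes $(1,1,1)$ with $\bar k=2$ and $M=1$. Every magnitude is at most $M$, yet the statement's recipe gives $s=0$, $\tau=3/2>M$, and $z_j=2/3<|\beta_j|/M=1$.
\item \emph{Optimality of $\bbeta^\star$ is beside the point.} Neither it nor the Huber threshold in $g_{\calN}^*$ is relevant. The inequality $\tau\le M$ holds for every $\bbeta$ in the domain of $g_{\calN}$.
\item \emph{Your fallback rests on a false claim.} With $(1,1,1)$, $\bar k=2$, $M=3/2$, one gets $\tau=M$ and $z_j=2/3=|\beta_j|/M<1$. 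So the lower bound can be active at an uncapped coordinate. This is harmless, being the degenerate case $\tau=M$ that the paper singles out, but it refutes the claim that activity forces $z_j=1$.
\item \emph{Adding the multiplier $\eta_j$ does not help.} For the stated $\bz^\star$ to be a KKT point of the full problem, it must first be primal feasible, which again requires the stated $\tau$ to satisfy $\tau\le M$.
\end{itemize}

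\textbf{The missing idea is the aggregate budget implied by $g_{\calN}(\bbeta^\star)<\infty$.} Some feasible $\bz$ exists. Summing $|\beta_j^\star|\le Mz_j$ over $\calJ_f(\calN)$ gives $\sum_{j\in\calJ_f(\calN)}|\beta_j^\star|\le M\bar k$. With this, $\tau\le M$ follows at once:
\begin{itemize}
\item if $s\ge1$, the consistency condition gives $\tau\le|\beta^\star_{\pi(s)}|\le M$;
\item if $s=0$, then $\tau=\sum_{r=1}^{p_f}|\beta^\star_{\pi(r)}|/\bar k\le M$.
\end{itemize}

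The paper builds the same fact into its existence argument. Define $h(t):=\sum_{j\in\calJ_f(\calN)}\min\{1,|\beta_j^\star|/t\}$. Then $h(M)\le\sum_{j\in\calJ_f(\calN)}|\beta_j^\star|/M\le\bar k$, while $h(t)$ exceeds $\bar k$ for small $t$. Hence the budget equation has a root in $(0,M]$.

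Your item~1 finds a root only in $(0,\infty)$ because it never evaluates $h$ at $t=M$. Add that evaluation, and note that $h$ is strictly decreasing beyond the smallest nonzero free magnitude. Then the root is unique, and every $s$ satisfying the consistency condition produces this same $\tau\le M$. With that addition, your relaxation-plus-verification proof is complete.
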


\begin{proof}
For fixed $\bbeta^\star$, the smooth term $F(\bX\bbeta^\star)$ is constant with respect to $\bz$.
Thus recovering $\bz^\star$ reduces to solving the optimization problem defining $g_{\calN}(\bbeta^\star)$.
The coordinates in $\calJ_0(\calN)$ and $\calJ_1(\calN)$ are fixed by the node.
On the free coordinates, the remaining problem is
\[
    \min_{\bz}
    \left\{
        \frac12\sum_{j\in\calJ_f(\calN)}
        \frac{(\beta_j^\star)^2}{z_j}
        :
        0\le z_j\le 1,\quad
        |\beta_j^\star|\le Mz_j,\quad
        \sum_{j\in\calJ_f(\calN)}z_j\le \bar k
    \right\}.
\]
When $\beta_j^\star=0$, the corresponding objective term is defined to be $0$ even if $z_j=0$.
Thus zero-coefficient coordinates do not affect the minimization over $\bz$.
For the nonzero free coordinates, we can combine the two lower bounds for $z_j$ and get a unified lower bound $z_j\ge |\beta_j^\star|/M$.
The resulting $\bz$-subproblem is convex with linear constraints, so any feasible point satisfying the KKT conditions is globally optimal.
Let $\pi$ sort the free magnitudes in descending order.\\

\textbf{Nonbinding remaining budget.}

If $\bar k\ge p_f$, or if $\bar k<p_f$ and $|\beta_{\pi(\bar k+1)}^\star|=0$, then the number of nonzero free coefficients is at most $\bar k$.
Setting $z_j=1$ for all $j\in\calJ_f(\calN)$ with $\beta_j^\star\ne0$ and $z_j=0$ for all $j\in\calJ_f(\calN)$ with $\beta_j^\star=0$ satisfies the cardinality constraint.
This choice is optimal because $(\beta_j^\star)^2/(2z_j)$ is decreasing in $z_j$ whenever $\beta_j^\star\ne0$, so every nonzero free coordinate should use the largest feasible value $z_j=1$ when the budget allows it.\\

\textbf{Binding remaining budget.}

Now suppose $\bar k<p_f$ and $|\beta_{\pi(\bar k+1)}^\star|>0$.
There are then more than $\bar k$ nonzero free coefficients, so setting $z_j=1$ for all of them is infeasible.
Moreover, the cardinality constraint must be active at the optimum.
If $\sum_{j\in\calJ_f(\calN)}z_j<\bar k$, then at least one nonzero free coordinate must have $z_j<1$; increasing that coordinate slightly would remain feasible and would strictly decrease the objective, a contradiction.
Thus
\[
    \sum_{j\in\calJ_f(\calN)}z_j=\bar k.
\]

We translate the binding-budget problem into its Lagrangian form.
Let $\nu\ge0$ be the multiplier for $\sum_{j\in\calJ_f(\calN)}z_j\le \bar k$, let $\eta_j\ge0$ be the multiplier for $|\beta_j^\star|/M-z_j\le0$, and let $\omega_j\ge0$ be the multiplier for $z_j-1\le0$.
Then the Lagrangian for the above optimization problem is
\begin{align*}
    \calL(\bz,\nu,\boldsymbol\eta,\boldsymbol\omega)
    &=
    \frac12\sum_{j\in\calJ_f(\calN)}
    \frac{(\beta_j^\star)^2}{z_j}
    +
    \nu\left(\sum_{j\in\calJ_f(\calN)}z_j-\bar k\right)\\
    &\quad+
    \sum_{j\in\calJ_f(\calN)}
    \eta_j(\frac{|\beta_j^\star|}{M}-z_j)
    +
    \sum_{j\in\calJ_f(\calN)}
    \omega_j(z_j-1).
\end{align*}
For the nonzero free coordinates, the relevant KKT conditions are:
\begin{align*}
    &\text{primal feasibility:}
    && |\beta_j^\star|/M\le z_j\le 1
    \quad \forall j\in\calJ_f(\calN)\ \text{with}\ \beta_j^\star\ne0,\\
    &&& \sum_{j\in\calJ_f(\calN)}z_j\le \bar k,\\
    &\text{dual feasibility:}
    && \nu\ge0,\quad \eta_j\ge0,\quad \omega_j\ge0
    \quad \forall j\in\calJ_f(\calN)\ \text{with}\ \beta_j^\star\ne0,\\
    &\text{stationarity:}
    && -\frac{(\beta_j^\star)^2}{2z_j^2}+\nu-\eta_j+\omega_j=0
    \quad \forall j\in\calJ_f(\calN)\ \text{with}\ \beta_j^\star\ne0,\\
    &\text{complementary slackness:}
    && \nu\!\left(\sum_{j\in\calJ_f(\calN)}z_j-\bar k\right)=0,\\
    &&& \eta_j(|\beta_j^\star|/M-z_j)=0
    \quad \forall j\in\calJ_f(\calN)\ \text{with}\ \beta_j^\star\ne0, \\
    &&& \omega_j(z_j-1)=0
    \quad \forall j\in\calJ_f(\calN)\ \text{with}\ \beta_j^\star\ne0.
\end{align*}
The multiplier $\nu$ is strictly positive in this case.
Let $q$ be the number of nonzero free coefficients.
The binding-budget case has $q>\bar k$.
If every nonzero free coordinate had $z_j=1$, then $\sum_{j\in\calJ_f(\calN)}z_j\ge q>\bar k$, which violates feasibility.
Therefore, at least one nonzero free coordinate $j_0$ must satisfy $z_{j_0}<1$.
For this coordinate, complementary slackness for the upper bound gives $\omega_{j_0}=0$.
Its stationarity equation becomes
\[
    \nu
    =
    \frac{(\beta_{j_0}^\star)^2}{2z_{j_0}^2}
    +
    \eta_{j_0}.
\]
Since $\beta_{j_0}^\star\ne0$, $z_{j_0}>0$, and $\eta_{j_0}\ge0$, the right-hand side is strictly positive.
Thus $\nu>0$.

For any nonzero coordinate whose lower and upper bounds are inactive (\textit{i.e.}, $|\beta_j^\star|/M<z_j<1$, so $\eta_j=\omega_j=0$), stationarity gives
\[
    z_j=\frac{|\beta_j^\star|}{\tau},
\]
where $\tau:=\sqrt{2\nu}$.
Moreover, since we know $|\beta_j^\star|/M<z_j$, we get $\tau < M$.

For any nonzero coordinate whose lower bound is active (\textit{i.e.}, $z_j=|\beta_j^\star|/M$), we get $\tau \geq M$.
Here we only need to discuss coordinates whose lower bound is active while the upper bound is inactive; if $z_j=|\beta_j^\star|/M=1$, then the coordinate is already covered by the capped case $z_j=1$.
To see this, note that the stationarity condition with $\omega_j=0$ (because upper bound is inactive) gives
\[
    \eta_j
    =
    \nu-\frac{M^2}{2}
    =
    \frac{\tau^2-M^2}{2}.
\]
Since $\eta_j\ge0$, lower-bound activity requires $\tau\ge M$.

Thus, from the previous discussions on Case 1 (there exist some coordinate such that both the lower and upper bounds are inactive) and Case 2 (there exists some coordinate such that the lower bound is active), we can conclude that Case 1 and Case 2 cannot coexist.

Case 1 leads to the scenario that every nonzero free coordinate is either upper-bound active, so $z_j=1$, or follows the inactive-bound stationarity rule $z_j=|\beta_j^\star|/\tau$.
Case 2 corresponds to the degenerate threshold value $\tau=M$: a lower-bound-active coordinate has $z_j=|\beta_j^\star|/M$, which is the same as $z_j=|\beta_j^\star|/\tau$ when $\tau=M$.
Therefore, both cases can be represented by the single capped form
\[
    z_j=\min\left\{1,\frac{|\beta_j^\star|}{\tau}\right\},
\]
with $\tau\in(0,M]$.
For this capped form, we can choose nonnegative KKT multipliers $(\nu,\boldsymbol\eta,\boldsymbol\omega)$ with $\nu=\tau^2/2$ so that stationarity and complementary slackness hold coordinatewise.
More explicitly, for every nonzero free coordinate, one valid choice is
\[
    \nu=\frac{\tau^2}{2},
    \qquad
    \eta_j=0,
    \qquad
    \omega_j
    =
    \begin{cases}
        \big((\beta_j^\star)^2-\tau^2\big)/2, & |\beta_j^\star|\ge \tau,\\
        0, & |\beta_j^\star|<\tau.
    \end{cases}
\]
If $|\beta_j^\star|\ge\tau$, then $z_j=1$ and this choice of $\omega_j$ enforces stationarity.
If $|\beta_j^\star|<\tau$, then $z_j=|\beta_j^\star|/\tau$ and stationarity holds with $\eta_j=\omega_j=0$.
The lower-bound-active case occurs only at the boundary $\tau=M$, where $z_j=|\beta_j^\star|/M=|\beta_j^\star|/\tau$, so $\eta_j=0$ is still valid.

Because the cardinality constraint is binding, the remaining task is to choose $\tau$ so that the capped formula uses exactly the remaining budget.
Equivalently, $\tau$ is chosen as the solution of the scalar equation
\begin{equation}
    \sum_{j\in\calJ_f(\calN)}
    \min\left\{1,\frac{|\beta_j^\star|}{\tau}\right\}
    =
    \bar k,
    \qquad
    0<\tau\le M.
    \label{eq:tau_budget_equation}
\end{equation}
Such a $\tau$ exists.
Indeed, feasibility of $\bbeta^\star$ for the relaxation implies
\[
    \sum_{j\in\calJ_f(\calN)}
    \frac{|\beta_j^\star|}{M}
    \le
    \bar k.
\]
Therefore, the left-hand side above is at most $\bar k$ when $\tau=M$.
On the other hand, as $\tau\downarrow0$, the same left-hand side approaches the number of nonzero free coefficients, which is larger than $\bar k$ in the binding-budget case.
By continuity, a solution $\tau\in(0,M]$ exists for~\eqref{eq:tau_budget_equation}.

Equivalently, after sorting the free coordinates so that
\[
    |\beta_{\pi(1)}^\star|
    \ge
    |\beta_{\pi(2)}^\star|
    \ge
    \cdots
    \ge
    |\beta_{\pi(p_f)}^\star|,
\]
we find the number $s$ of coordinates that are capped at $z_j=1$.
For a candidate $s\in\{0,\ldots,\bar k-1\}$, the budget equation becomes
\[
    s+\frac{1}{\tau}
    \sum_{r=s+1}^{p_f}|\beta_{\pi(r)}^\star|
    =
    \bar k,
\]
which provides us with a formula to compute $\tau$ as
\[
    \tau
    =
    \frac{\sum_{r=s+1}^{p_f}|\beta_{\pi(r)}^\star|}{\bar k-s}.
\]
A coordinate is capped exactly when $|\beta_j^\star|\ge \tau$.
Thus, if exactly the first $s$ sorted coordinates are capped, then the $s$th sorted magnitude must be at least $\tau$, while the next sorted magnitude must be at most $\tau$.
Therefore, the correct value of $s$ is any value satisfying the consistency condition
\[
    |\beta_{\pi(s)}^\star|
    \ge
    \tau
    \ge
    |\beta_{\pi(s+1)}^\star|,
\]
with the boundary convention $|\beta_{\pi(0)}^\star|=+\infty$.
Once such an $s$ is found, the formula
\[
    z_j=\min\left\{1,\frac{|\beta_j^\star|}{\tau}\right\}
\]
recovers an optimal relaxed indicator vector.
\end{proof}

%% file: sections/Appendix/multi_gpu_distributed_computing.tex
There are two useful multi-GPU regimes.
The node-parallel regime is straightforward: the CPU keeps the global BnB frontier, assigns different node batches to different GPUs, and collects the returned bounds, feasible solutions, branching candidates, and node statuses.
Each GPU then runs the same single-GPU pipeline on its assigned nodes.

The more interesting regime is row-distributed data-parallel computation, used when $\bX$ is too big to fit on one GPU.
Split the data into $D$ row groups,
\begin{align}
    \bX=
    \begin{bmatrix}\bX^{(1)}\\ \vdots\\ \bX^{(D)}\end{bmatrix},
    \qquad
    \by=
    \begin{bmatrix}\by^{(1)}\\ \vdots\\ \by^{(D)}\end{bmatrix},
    \qquad
    \bX^{(d)}\in\bbR^{n_d\times p},
    \qquad
    \sum_{d=1}^D n_d=n.
    \label{eq:distributed_row_partition}
\end{align}
For a batch coefficient matrix $\bB$, GPU $d$ evaluates its local predictors, derivatives, dual variables, and feature-space products as
\begin{align}
    \bS^{(d)}&=\bX^{(d)}\bB,
    &
    R_{i,b}^{(d)}&=\partial_s\ell(s,y_i^{(d)})\big|_{s=S_{i,b}^{(d)}},
    &
    \bZeta^{(d)}&=-\bR^{(d)},\nonumber\\
    \bG^{(d)}&=(\bX^{(d)})^\top\bR^{(d)},
    &
    \bQ^{(d)}&=\frac{1}{2\lambda_2}(\bX^{(d)})^\top\bZeta^{(d)}.
    \label{eq:distributed_local_quantities}
\end{align}
The global gradient and scaled feature-space dual matrix are obtained by summing over row groups as
\begin{align}
    \nabla_{\bB}\calF_{\mathrm{batch}}(\bB)
    =
    \sum_{d=1}^D \bG^{(d)},
    \qquad
    \bQ
    =
    \sum_{d=1}^D \bQ^{(d)}.
    \label{eq:distributed_gradient_and_dual_reductions}
\end{align}
The same local-evaluation and cross-GPU-summation pattern gives the primal and dual objective vectors.
For each node $b$, GPU $d$ computes
\begin{align}
    \varphi_b^{(d)}
    =
    \sum_{i=1}^{n_d}\ell(S_{i,b}^{(d)},y_i^{(d)}),
    \qquad
    \psi_b^{(d)}
    =
    \sum_{i=1}^{n_d}\ell^*(-\zeta_{i,b}^{(d)},y_i^{(d)}),
    \label{eq:distributed_local_objective_terms}
\end{align}
and the coordinator forms
\begin{align}
    [\bPhi(\bB)]_b
    &=
    \sum_{d=1}^D \varphi_b^{(d)}
    +
    2\lambda_2 g_{\calN_b}(\bbeta^{(b)}),
    \label{eq:distributed_primal_objective}\\
    [\bm{\Psi}(\bZeta^{(1)},\ldots,\bZeta^{(D)})]_b
    &=
    -\sum_{d=1}^D \psi_b^{(d)}
    -
    2\lambda_2 g_{\calN_b}^*(\bq^{(b)}).
    \label{eq:distributed_dual_objective}
\end{align}
In summary, row distribution changes only the row-dependent computations: each GPU evaluates its own smooth-loss, conjugate-loss, gradient, and feature-space dual contributions, and a coordinator GPU sums these quantities over $d=1,\ldots,D$.
The coordinator then applies the same feature-side proximal operator and node-dependent kernels as in the single-GPU batched algorithm, and broadcasts the updated $\bB$ for the next iteration.

%% file: sections/Appendix/experimental_setups.tex
\subsection{Datasets}

\paragraph{Synthetic Data Generation Process}

For each synthetic instance, we set $n=p$ and generate the rows of $\bX$ independently from a centered Gaussian distribution with Toeplitz covariance,
\[
    \bx_i\sim \calN(\mathbf 0,\bSigma),
    \qquad
    \Sigma_{j\ell}=\rho^{|j-\ell|}.
\]
The parameter $\rho$ controls feature correlation, with larger values producing more strongly correlated columns.
We construct the true sparse coefficient vector $\bbeta^\star$ by setting every $(p/k)$th coordinate to $1$, and setting all other coordinates to zero.
In other words, nonzero entries are placed at evenly spaced coordinates with the first one starting at the $(p/k)$-th coordinate.

For linear regression, responses are generated from
\[
    y_i=\bx_i^\top\bbeta^\star+\epsilon_i,
\]
where $\epsilon_i$ is Gaussian noise with $\epsilon_i\sim\calN(0,\|\bX\bbeta^\star\|/\mathrm{SNR})$ and $\mathrm{SNR}=5$.
For logistic regression, labels are sampled from $\{-1,1\}$ according to
\[
    \mathbb P(y_i=1\mid \bx_i)
    =
    \frac{1}{1+\exp(-\bx_i^\top\bbeta^\star)}.
\]
Throughout the experiments, we use cardinality constraint $k=10$, $\ell_2$ regularization $\lambda_2=1.0$, box constraint $M=2.0$, and feature correlations $\rho=0.9$ for both linear and logistic regression.
We choose the feature dimension $p$ from the set $\{16000, 8000, 4000, 2000, 1000, 500\}$.
The smaller the feature dimension is, the harder the problem becomes to certify optimality (requiring processing many more nodes in BnB) because the number of observations is decreasing.
For reproducibility, we use the same random seed to generate the synthetic dataset, so all methods will run on the same data instance.

\paragraph{Real-world Datasets and Preprocessing}

We also evaluate on two real-world datasets following the OKGLM experiments~\citep{liu2026gpufriendlylinearlyconvergentfirstorder}.
For linear regression, we use the Santander Customer Transaction Prediction dataset~\citep{santander}.
After removing redundant features and normalizing the remaining columns, the processed instance has $n=4459$ observations and $p=4735$ features.
For logistic regression, we use the DOROTHEA drug-discovery dataset~\citep{asuncion2007uci}.
This dataset is a high-dimensional binary classification benchmark built from molecular descriptors.
We use a balanced version of the data; after removing redundant features, the processed instance has $n=2300$ observations and $p=89989$ features.
For both real-world datasets, each feature column is centered to have mean $0$ and rescaled to have Euclidean norm $1$.

For the real-world experiments, we set $\lambda_2=1.0$ and $M=10$.
These choices follow the earlier cross-validation study in the OKGLM experiments, where $\lambda_2=1.0$ performed best on both datasets and $M=10$ was large enough to keep the box constraint from affecting the selected sparse models.
For Santander, we report results for $k\in\{6,7,8,9,10\}$.
For DOROTHEA, we report results for $k\in\{5,15,25,35,45\}$.

\subsection{Baselines}

Gurobi and MOSEK (both use academic licenses) are applied to perspective formulations of Problem~\eqref{eq:intro_sparse_glm}.
For linear regression, Gurobi uses the native perspective MIP formulation.
For logistic regression, Gurobi uses the perspective formulation together with outer-approximation (cutting planes) for the logistic loss function.
MOSEK uses the perspective formulation for both losses.
Both commercial solvers receive the same beam-search warm start used by the original OKGLM implementation.

OKGLM (BSD-3 license) processes one BnB node at a time with beam-search size $5$ and lower-bound method uses duality-gap-restarted $+$ accelerated proximal gradient method.
Our method uses the same lower-bound computation method, but processes multiple nodes per GPU batch and performs rounding and re-optimization in batches on the GPU.
Unless otherwise stated, our method chooses the largest safe batch size for the active GPU and problem size.
See Appendix~\ref{sub:batch_size_policy} for more details.

\subsection{Additional Setup Details for Experiments}

We record total running time, final optimality gap, number of processed BnB nodes, effective batch size, runtime component breakdown, and GPU-utilization summaries.

For both the baselines and our method, we set a time limit to 3 hours.

The batch-size experiment uses synthetic linear and logistic instances with $n=p=1000$, feature correlation $\rho=0.9$, cardinality constraint $k=10$, $\ell_2$ regularization $\lambda_2=1.0$, and box constraint $M=2.0$.
For logistic regression, we vary the batch size over $\{2^7, \dots, 2^{16}\}$.
For linear regression, which is easier to solve, we also include smaller batch sizes and vary the batch size over $\{2^0,\ldots,2^{11} \}$.

\subsection{Automatic Batch Size Selection}
\label{sub:batch_size_policy}

Our method automatically selects the largest safe batch size for the GPU machine and the dataset size, using a memory-safe heuristic.

The automatic batch-size rule is intentionally conservative.
Let $U$ denote the usable GPU memory after reserving a safety margin, and let $m_{\mathrm{node}}$ denote the estimated per-node workspace required by one batched BnB node.
The realized batch size is chosen as the largest power of two not exceeding $U/m_{\mathrm{node}}$.
The per-node estimate can be decomposed as
\[
    m_{\mathrm{node}}
    =
    m_{\mathrm{lb}}(p)
    +
    m_{\mathrm{re-opt}}(n,k,\ell),
\]
where $m_{\mathrm{lb}}(p)$ is the lower-bound workspace and $m_{\mathrm{re-opt}}(n,k,\ell)$ is the re-optimization workspace for loss $\ell$.
For a fixed $(n,p,k)$, the lower-bound term is the same for linear and logistic regression, but the re-optimization term of logistic regression is larger than that of linear regression.
Re-optimization for linear regression mostly repeats simple least-squares calculations on the selected features. Re-optimization for logistic regression is heavier because, for each candidate support, it repeatedly computes prediction scores, converts them into probabilities, and evaluates the logistic objective. These extra arrays require more memory per node, so logistic batches are often smaller at the same $(n,p,k)$.
Thus two instances with the same $n$, $p$, and $k$ can receive different automatic batch sizes.
For example, in the synthetic $p=500$ main comparison, the raw safe capacity is slightly above $2^{16}$ for linear regression but slightly below $2^{16}$ for logistic regression; after power-of-two rounding, the realized batch sizes become $65536$ and $32768$, respectively.

\subsection{Computing Platforms}

We ran the GPU experiments on a computing cluster using NVIDIA A100 GPU nodes.
Unless otherwise stated, each GPU run uses one A100 GPU, and our method selects the largest safe batch size for the active GPU and instance size.
The commercial MIP baselines are run on CPU nodes with AMD Milan processors at 2.45 GHz; each baseline run uses 8 CPU cores and 100GB memory.

%% file: sections/Appendix/additional_experiments.tex
\subsection{How Much Time Does Each Component in Our BnB Take?}
\label{appendix_sub:BnB_component_time}

For our method, we report component-level wall-clock time for the batched lower-bound solve, feasible-solution re-optimization, CPU--GPU data transfer, branching, and node generation.
Tables~\ref{tab:ours_synthetic_profile} and~\ref{tab:ours_realworld_profile} report these statistics for the synthetic and real-world experiments in Section~\ref{sec:experiments}.
Each timing cell shows seconds on the first line and the percentage of total wall-clock time on the second line.
The lower-bound batch count is the number of batched GPU lower-bound passes; the re-optimization batch count is the number of batched re-optimization passes.
These counts are not exactly equal to the processed-node count divided by the effective batch size because the first batch, the last batches, and intermediate batches generated after pruning need not be full.

\input{results/neurips_2026_ours_profile_synthetic_compact_table}

\input{results/neurips_2026_ours_profile_realworld_compact_table}

\clearpage
\subsection{Variable Importance Analysis based on the Rashomon Set}
\label{appendix_sub:rashomon_variable_importance}

Let the saved Rashomon set be
\[
    \{(S_m,\widehat\bbeta^{(m)},\Phi_m):m=1,\ldots,N\}.
\]
The most straightforward variable importance analysis is support frequency:
\[
    \widehat\pi_j
    =
    \frac1N\sum_{m=1}^N\ind\{j\in S_m\}.
\]
Features with $\widehat\pi_j\approx1$ are selected by nearly all near-optimal sparse GLMs.
Features with intermediate frequency may be substitutable with correlated alternatives.

Coefficient summaries can be computed by defining $\widehat\beta_j^{(m)}=0$ when $j\notin S_m$ and reporting means, absolute means, sign frequencies, and coefficient ranges over the pool.\\

We can also perform variable importance analysis based on the model reliance score.
For logistic regression, a simple fixed-model reliance score removes the fitted contribution of feature $j$ while holding all other coefficients fixed.
Let
\[
    \eta_i^{(m)}=\bx_i^\top\widehat\bbeta^{(m)},
    \qquad
    \eta_{i,-j}^{(m)}=\eta_i^{(m)}-x_{ij}\widehat\beta_j^{(m)}.
\]
With labels $y_i\in\{-1,+1\}$, define
\[
    R_j^{(m)}
    =
    \frac1n\sum_{i=1}^n
    \log(1+\exp(-y_i\eta_{i,-j}^{(m)}))
    -
    \frac1n\sum_{i=1}^n
    \log(1+\exp(-y_i\eta_i^{(m)})).
\]
The interval
\[
    \left[\min_m R_j^{(m)},\max_m R_j^{(m)}\right]
\]
summarizes how much feature $j$ can matter across the entire sparse GLM Rashomon pool.
This is the GLM analogue of studying variable importance across many good models instead of one selected model \citep{fisher2019all,dong2020exploring,donnelly2023rashomon}.

Figures~\ref{fig:rashomon_support_frequency_logistic_rho09} and~\ref{fig:rashomon_model_reliance_logistic_rho09} summarize the saved Rashomon pool for the synthetic logistic instance with $n=p=1000$, $\rho=0.9$, $k=10$, $\lambda_2=1.0$, $M=2.0$, $\epsilon=0.1$, and $N_\texttt{Rashomon}=1000$.
The support-frequency plot counts how often each displayed feature appears in the saved sparse supports and orders the displayed features by increasing feature index.
The model-reliance plot reports, for each displayed feature, the increase in mean training logistic loss after dropping that feature's fitted contribution from each saved model, again ordered by increasing feature index.

Both plots show that the high-frequency and high-reliance features align closely with the true signal features at indices $0,100,200,\ldots,1000$.
The main discrepancy is that feature $599$ appears in place of feature $600$, but the two receive similar importance scores, suggesting that they act as nearly interchangeable correlated predictors.
Thus, the Rashomon pool reveals the broader set of statistically plausible features, rather than only the support selected by minimizing the objective function.

\begin{figure}[!h]
\centering
\includegraphics[width=0.95\linewidth]{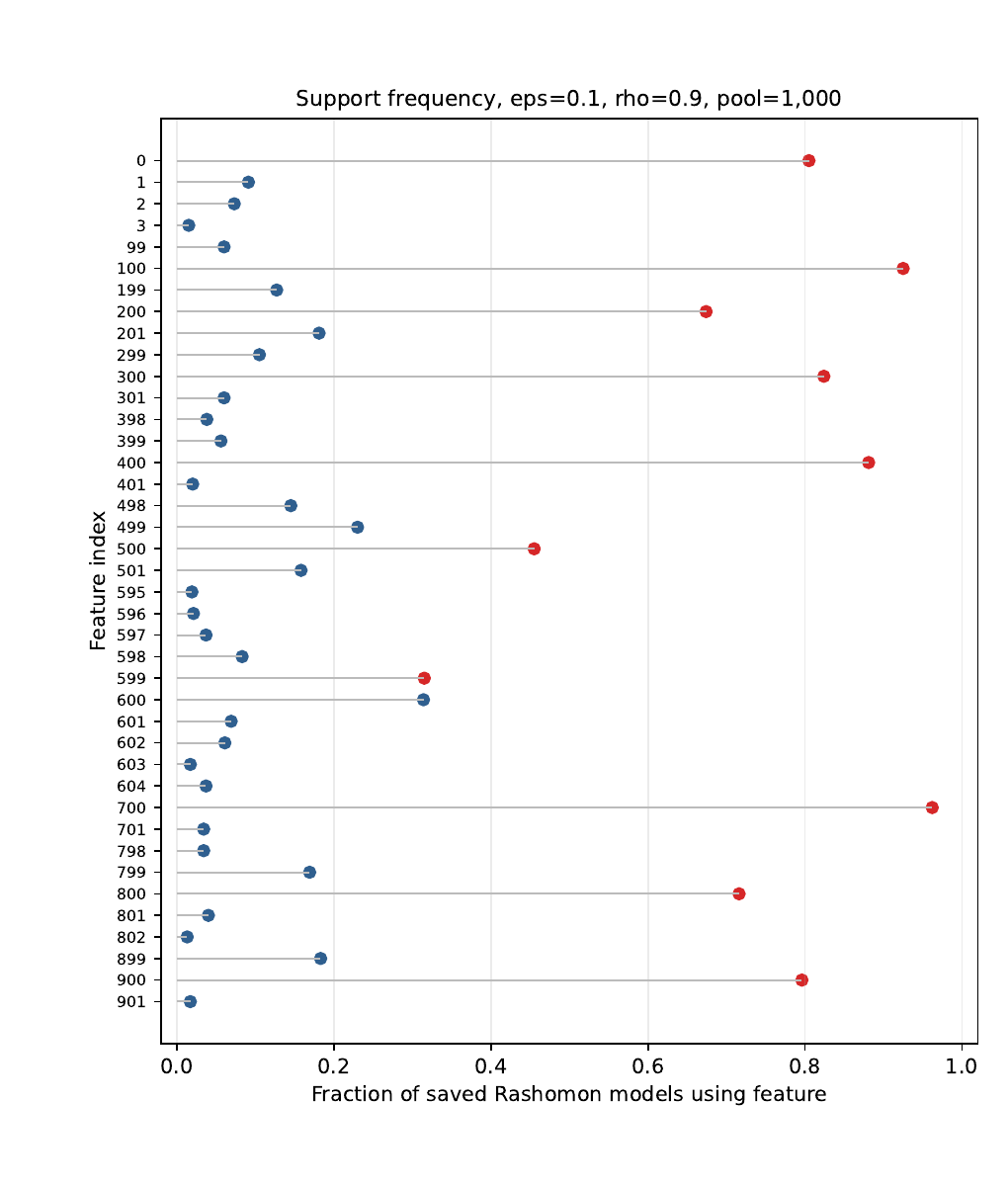}
\caption{Support-frequency summary for the saved top-$1000$ Rashomon pool on the synthetic logistic instance with $n=p=1000$, $\rho=0.9$, $k=10$, $\lambda_2=1.0$, and $M=2.0$. The displayed features are features that appear most frequently in the Rashomon set, listed vertically in increasing feature-index order. Red markers denote features used by the best saved sparse logistic model.}
\label{fig:rashomon_support_frequency_logistic_rho09}
\end{figure}

\begin{figure}[!h]
\centering
\includegraphics[width=0.82\linewidth]{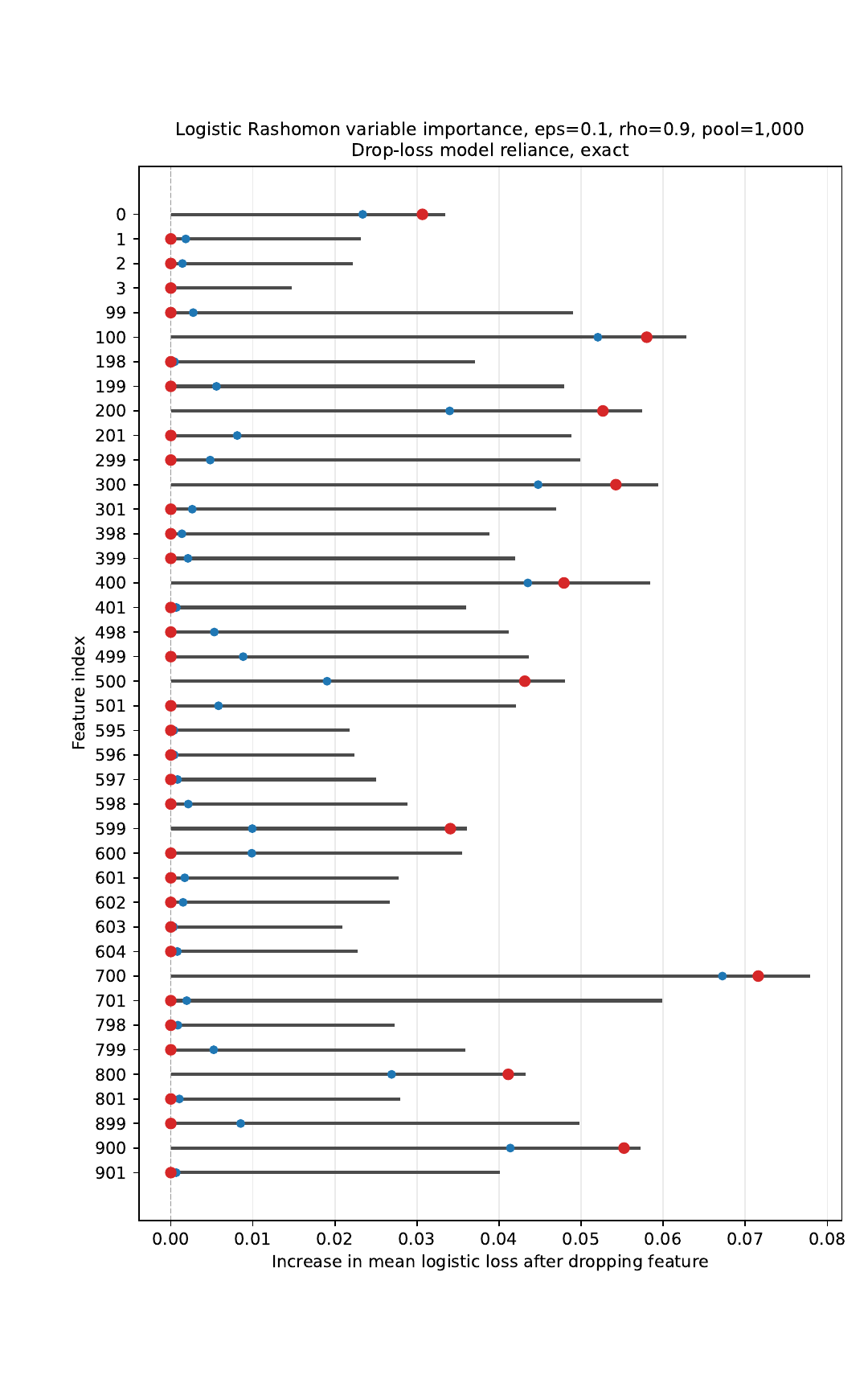}
\caption{Model-reliance summary for the saved top-$1000$ Rashomon pool on the synthetic logistic instance with $n=p=1000$, $\rho=0.9$, $k=10$, $\lambda_2=1.0$, and $M=2.0$. For each displayed feature, the score is the increase in mean logistic loss after removing the feature contribution while keeping the remaining fitted coefficients fixed. Gray segments show the minimum-to-maximum range over the saved pool, red markers show the best saved sparse logistic model, and blue markers show the pool mean. Features are listed vertically in increasing feature-index order.}
\label{fig:rashomon_model_reliance_logistic_rho09}
\end{figure}

\clearpage
\subsection{Secondary-Metric Consideration on the Dorothea Rashomon Pool}
\label{appendix_sub:dorothea_secondary_metric_queries}

Once we collect the Rashomon set, we can select models based on different metrics by scanning the saved models.
For classification, each model can be evaluated under secondary criteria such as AUC, accuracy, or calibration, allowing practitioners to choose a near-optimal sparse GLM that performs best on the metric they care about without rerunning BnB.
This mirrors the use of tree Rashomon sets for answering many model selection questions~\citep{xin2022exploring}.

Figures~\ref{fig:dorothea_primary_vs_auc}, \ref{fig:dorothea_primary_vs_accuracy}, and~\ref{fig:dorothea_accuracy_vs_auc} evaluate secondary metrics over the saved Dorothea logistic Rashomon set with $k=5$, $\lambda_2=1.0$, $M=10.0$, $\epsilon=0.1$, and \texttt{rashomon\_n}=1000.
The primary objective shown in the plot is the sparse logistic objective including the ridge penalty term.

\begin{figure}[!h]
\centering
\includegraphics[width=0.95\linewidth]{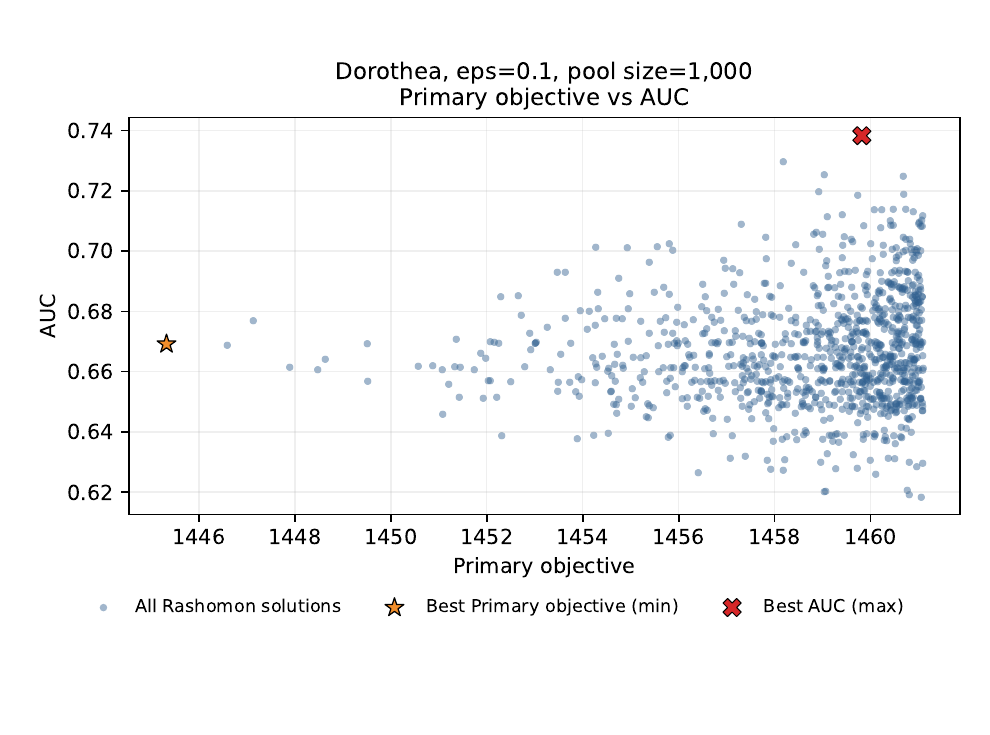}
\caption{Primary sparse-logistic objective versus AUC over the saved top-$1000$ Dorothea Rashomon set with $k=5$. The orange star is the model minimizing the primary objective; the red marker is the model maximizing AUC.}
\label{fig:dorothea_primary_vs_auc}
\end{figure}

\begin{figure}[!h]
\centering
\includegraphics[width=0.95\linewidth]{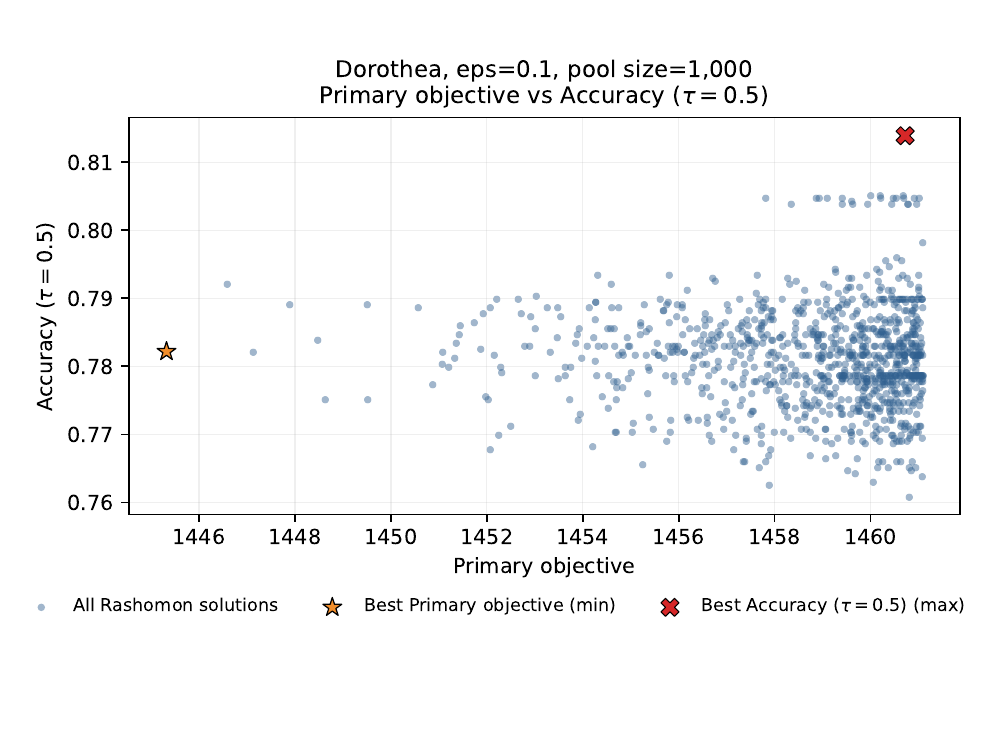}
\caption{Primary sparse-logistic objective versus accuracy over the saved top-$1000$ Dorothea Rashomon set with $k=5$. Accuracy uses the threshold $\tau=0.5$ on predicted probabilities.}
\label{fig:dorothea_primary_vs_accuracy}
\end{figure}

\begin{figure}[!h]
\centering
\includegraphics[width=0.95\linewidth]{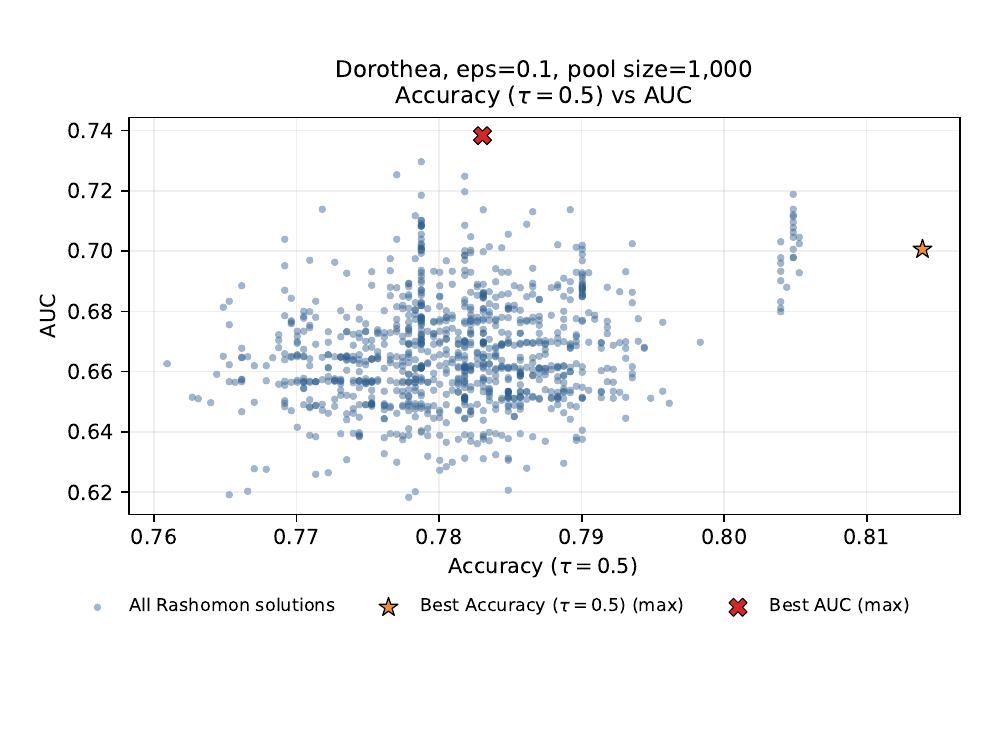}
\caption{Accuracy versus AUC over the saved top-$1000$ Dorothea Rashomon set with $k=5$. This plot shows whether the models preferred by a threshold-dependent metric are also preferred by a ranking metric.}
\label{fig:dorothea_accuracy_vs_auc}
\end{figure}

%% file: results/neurips_2026_ours_profile_synthetic_compact_table.tex
\begin{table}[!h]
\centering
\caption{Component-level runtime statistics for our method on the synthetic datasets. We report wall-clock seconds on the first line and the percentage of total wall time on the second line.}
\label{tab:ours_synthetic_profile}
\resizebox{\linewidth}{!}{%
\begin{tabular}{lcccccccc}
\toprule
$p$ & Total & Lower bound & Re-opt. & Transfer & Branch/gen. & LB batches & Re-opt. batches & Batch size \\
 & (s, \%) & (s, \%) & (s, \%) & (s, \%) & (s, \%) & & & \\
\midrule
\multicolumn{9}{l}{\textit{Synthetic (Linear regression)}} \\
16K & \shortstack{30.6\\100.0\%} & \shortstack{24.7\\80.6\%} & \shortstack{5.8\\19.1\%} & \shortstack{0.0\\0.1\%} & \shortstack{0.0\\0.0\%} & 12 & 12 & 1,024 \\
8K & \shortstack{15.1\\100.0\%} & \shortstack{12.1\\80.2\%} & \shortstack{2.9\\19.3\%} & \shortstack{0.0\\0.1\%} & \shortstack{0.0\\0.1\%} & 11 & 11 & 4,096 \\
4K & \shortstack{16.6\\100.0\%} & \shortstack{10.7\\64.1\%} & \shortstack{5.9\\35.3\%} & \shortstack{0.0\\0.1\%} & \shortstack{0.0\\0.1\%} & 12 & 12 & 8,192 \\
2K & \shortstack{20.7\\100.0\%} & \shortstack{13.1\\63.4\%} & \shortstack{7.3\\35.5\%} & \shortstack{0.0\\0.2\%} & \shortstack{0.0\\0.2\%} & 15 & 14 & 16,384 \\
1K & \shortstack{24.0\\100.0\%} & \shortstack{14.4\\60.0\%} & \shortstack{9.1\\37.9\%} & \shortstack{0.0\\0.2\%} & \shortstack{0.1\\0.4\%} & 18 & 17 & 32,768 \\
500 & \shortstack{22.2\\100.0\%} & \shortstack{13.0\\58.3\%} & \shortstack{9.0\\40.3\%} & \shortstack{0.0\\0.1\%} & \shortstack{0.1\\0.3\%} & 18 & 17 & 65,536 \\
\midrule
\multicolumn{9}{l}{\textit{Synthetic (Logistic regression)}} \\
16K & \shortstack{100.8\\100.0\%} & \shortstack{81.7\\81.1\%} & \shortstack{17.0\\16.9\%} & \shortstack{0.3\\0.3\%} & \shortstack{0.3\\0.3\%} & 15 & 14 & 1,024 \\
8K & \shortstack{93.5\\100.0\%} & \shortstack{68.5\\73.3\%} & \shortstack{21.2\\22.7\%} & \shortstack{0.5\\0.6\%} & \shortstack{0.7\\0.8\%} & 19 & 18 & 4,096 \\
4K & \shortstack{80.4\\100.0\%} & \shortstack{52.7\\65.5\%} & \shortstack{23.3\\29.0\%} & \shortstack{0.6\\0.7\%} & \shortstack{0.9\\1.1\%} & 21 & 20 & 8,192 \\
2K & \shortstack{160.7\\100.0\%} & \shortstack{87.2\\54.3\%} & \shortstack{48.0\\29.9\%} & \shortstack{0.8\\0.5\%} & \shortstack{9.7\\6.0\%} & 28 & 27 & 16,384 \\
1K & \shortstack{473.5\\100.0\%} & \shortstack{190.3\\40.2\%} & \shortstack{153.3\\32.4\%} & \shortstack{1.8\\0.4\%} & \shortstack{64.6\\13.6\%} & 54 & 53 & 32,768 \\
500 & \shortstack{4348.3\\100.0\%} & \shortstack{467.0\\10.7\%} & \shortstack{375.5\\8.6\%} & \shortstack{4.1\\0.1\%} & \shortstack{3189.8\\73.4\%} & 130 & 128 & 32,768 \\
\bottomrule
\end{tabular}%
}
\end{table}

%% file: results/neurips_2026_ours_profile_realworld_compact_table.tex
\begin{table}[!t]
\centering
\caption{Component-level runtime statistics for our method on the real-world Santander and DOROTHEA datasets. We report wall-clock seconds on the first line and the percentage of total wall time on the second line.}
\label{tab:ours_realworld_profile}
\resizebox{\linewidth}{!}{%
\begin{tabular}{lcccccccc}
\toprule
$k$ & Total & Lower bound & Re-opt. & Transfer & Branch/gen. & LB batches & Re-opt. batches & Batch size \\
 & (s, \%) & (s, \%) & (s, \%) & (s, \%) & (s, \%) & & & \\
\midrule
\multicolumn{9}{l}{\textit{Santander (Linear regression)}} \\
6 & \shortstack{17.7\\100.0\%} & \shortstack{15.1\\85.3\%} & \shortstack{2.4\\13.8\%} & \shortstack{0.0\\0.2\%} & \shortstack{0.0\\0.1\%} & 41 & 40 & 8,192 \\
7 & \shortstack{21.7\\100.0\%} & \shortstack{18.5\\85.4\%} & \shortstack{2.9\\13.4\%} & \shortstack{0.0\\0.2\%} & \shortstack{0.0\\0.2\%} & 42 & 41 & 8,192 \\
8 & \shortstack{26.7\\100.0\%} & \shortstack{22.9\\85.6\%} & \shortstack{3.4\\12.6\%} & \shortstack{0.1\\0.3\%} & \shortstack{0.1\\0.3\%} & 45 & 44 & 8,192 \\
9 & \shortstack{35.5\\100.0\%} & \shortstack{30.1\\84.7\%} & \shortstack{4.4\\12.3\%} & \shortstack{0.1\\0.4\%} & \shortstack{0.2\\0.5\%} & 52 & 51 & 8,192 \\
10 & \shortstack{52.3\\100.0\%} & \shortstack{42.3\\80.9\%} & \shortstack{6.7\\12.7\%} & \shortstack{0.3\\0.6\%} & \shortstack{0.7\\1.4\%} & 63 & 62 & 8,192 \\
\midrule
\multicolumn{9}{l}{\textit{DOROTHEA (Logistic regression)}} \\
5 & \shortstack{34.4\\100.0\%} & \shortstack{34.2\\99.6\%} & \shortstack{0.1\\0.4\%} & \shortstack{0.0\\0.0\%} & \shortstack{0.0\\0.0\%} & 6 & 6 & 256 \\
15 & \shortstack{58.8\\100.0\%} & \shortstack{58.0\\98.6\%} & \shortstack{0.7\\1.3\%} & \shortstack{0.0\\0.0\%} & \shortstack{0.0\\0.0\%} & 17 & 17 & 256 \\
25 & \shortstack{224.1\\100.0\%} & \shortstack{221.6\\98.9\%} & \shortstack{1.9\\0.9\%} & \shortstack{0.1\\0.1\%} & \shortstack{0.1\\0.0\%} & 33 & 32 & 256 \\
35 & \shortstack{904.7\\100.0\%} & \shortstack{892.8\\98.7\%} & \shortstack{5.1\\0.6\%} & \shortstack{0.8\\0.1\%} & \shortstack{1.1\\0.1\%} & 52 & 51 & 256 \\
45 & \shortstack{2197.6\\100.0\%} & \shortstack{2129.4\\96.9\%} & \shortstack{17.8\\0.8\%} & \shortstack{3.2\\0.1\%} & \shortstack{10.1\\0.5\%} & 86 & 85 & 256 \\
\bottomrule
\end{tabular}%
}
\end{table}

%% file: sections/Appendix/compact_Rashomon_pool_storage.tex
A support-level Rashomon set may contain many sparse supports.
Storing one dense length-$p$ mask per support is wasteful when $k\ll p$, and storing one independent length-$k$ index array per support misses shared structure across related supports.
We instead store supports in a trie data structure.
Please see Figure~\ref{fig:rashomon_support_trie} for a visualization.
This is similar to the compressed model-set representation in TreeFARMS~\citep{xin2022exploring}, but the object stored here is simpler: TreeFARMS stores sparse decision-tree structures, whereas our pool stores GLM support sets and optional re-optimized coefficients.
Note that the trie for Rashomon-set storage is not the BnB search tree.
The BnB tree has binary branch edges such as $z_j=1$ and $z_j=0$.
Our trie data structure is a separate storage object: after a Rashomon support has been found, it stores only the included feature indices and omits all excluded branch decisions.

Formally, let the trie have node set $\calV$ and root $r_0$.
Each non-root node $v\in\calV\setminus\{r_0\}$ stores a parent $p(v)\in\calV$ and an edge label $a(v)\in[p]$.
For any leaf $\ell$, let
\[
    P(\ell)=(a_1,\ldots,a_q)
\]
be the sequence of edge labels along the path from $r_0$ to $\ell$.
The recovered support is the set $S(\ell)=\{a_1,\ldots,a_q\}$.
The insertion sequence may follow the order of included branch decisions that produced the support; it is a storage order, while $S(\ell)$ is the unordered support set.
Thus a Rashomon record can be written as
\[
    \left(\ell_m,\ \Phi_m\right),
\]
where $\ell_m$ is the trie leaf for the $m$th saved support and $\Phi_m=v(S(\ell_m))$ is its support-restricted objective value.
The support itself is determined only by the trie leaf $\ell_m$.

For example, suppose the pool contains supports
\[
    \{2,7,18\},\qquad
    \{2,11,18\},\qquad
    \{3,9,18\},\qquad
    \{4,12\}.
\]
If their insertion sequences are $(18,2,7)$, $(18,2,11)$, $(18,9,3)$, and $(12,4)$, then the trie shares the prefix $(18,2)$ between the first two records and the prefix $(18)$ between the first three records.
\input{figures/rashomon_support_trie.tex}

If coefficients are also stored, they should be attached to the leaf rather than trie edges.
The same feature can have different fitted coefficients in different supports, so edge-level coefficient sharing is not valid.
Let the saved supports be $S_1,\ldots,S_N$, and let
\[
    \widehat{\bbeta}_{S_m}^{(m)}
    \in
    \bbR^{|S_m|}
\]
denote the re-optimized coefficients on support $S_m$, ordered in the same order as the recovered trie labels.
We store all active coefficients in one vector $\bc$ together with an offset vector $\vo=(o_0,o_1,\ldots,o_N)$, where $o_0=0$ and
\[
    o_m=\sum_{r=1}^m |S_r|.
\]
This layout stores only the active coefficients, requiring $\sum_{m=1}^N |S_m|$ numbers instead of $Np$ dense entries.
It also avoids keeping a separate coefficient vector for every solution in the Rashomon set: once the support is recovered from the trie, the two neighboring offsets in $\vo$ identify exactly where its coefficients are stored in $\bc$.
Then the coefficient vector for record $m$ is the slice
\[
    \widehat{\bbeta}_{S_m}^{(m)}
    =
    \bc_{\,o_{m-1}+1:o_m}.
\]
For the four supports above, if the fitted coefficients are
\[
    (0.4,-1.2,0.3),\quad
    (0.5,-1.0,0.1),\quad
    (0.2,0.8,-0.4),\quad
    (-0.6,1.1),
\]
then
\[
    \vo=(0,3,6,9,11),
    \qquad
    \bc=(0.4,-1.2,0.3,\ 0.5,-1.0,0.1,\ 0.2,0.8,-0.4,\ -0.6,1.1).
\]
The second record, for example, uses entries $4$ through $6$ of $\bc$, matching the path sequence $(18,2,11)$ and the support $\{2,11,18\}$.

%% file: figures/rashomon_support_trie.tex
\begin{figure}[!ht]
\centering
\begin{tikzpicture}[
    scale=0.78,
    x=1.5cm,
    y=1.5cm,
    every node/.style={font=\large},
    bnode/.style={circle, draw=blue!55!black, very thick, fill=gray!25, minimum size=7.5mm, inner sep=0pt},
    leafbad/.style={circle, draw=blue!55!black, very thick, fill=gray!10, minimum size=7.5mm, inner sep=0pt},
    leafgood/.style={circle, draw=blue!55!black, very thick, fill=green!65!black, minimum size=7.5mm, inner sep=0pt},
    tnode/.style={circle, draw=blue!55!black, very thick, fill=gray!15, minimum size=7.5mm, inner sep=0pt},
    tleaf/.style={circle, draw=blue!55!black, very thick, fill=blue!10, minimum size=7.5mm, inner sep=0pt},
    edge/.style={-{Latex[length=2.0mm]}, thick},
    trieedge/.style={thick},
    elabel/.style={fill=white, inner sep=1pt, font=\normalsize},
    title/.style={font=\large\bfseries}
]
    \node[title] at (0,0.85) {BnB decision tree};
    \node[bnode] (b0) at (0,0) {};
    \node[bnode] (b1) at (-2.6,-1.35) {};
    \node[leafbad] (b2) at (2.6,-1.35) {};
    \node[bnode] (b3) at (-4.15,-2.85) {};
    \node[leafbad] (b4) at (-1.05,-2.85) {};
    \node[leafgood] (b5) at (-5.15,-4.35) {};
    \node[leafbad] (b6) at (-3.15,-4.35) {};

    \draw[edge] (b0) -- node[elabel, pos=0.55, above left=1pt] {$z_{18}=1$} (b1);
    \draw[edge] (b0) -- node[elabel, pos=0.55, above right=1pt] {$z_{18}=0$} (b2);
    \draw[edge] (b1) -- node[elabel, pos=0.55, above left=1pt] {$z_2=1$} (b3);
    \draw[edge] (b1) -- node[elabel, pos=0.55, above right=1pt] {$z_2=0$} (b4);
    \draw[edge] (b3) -- node[elabel, pos=0.55, above left=1pt] {$z_7=1$} (b5);
    \draw[edge] (b3) -- node[elabel, pos=0.55, above right=1pt] {$z_7=0$} (b6);

    \draw[red!85, very thick] (b2.south west) -- (b2.north east);
    \draw[red!85, very thick] (b2.north west) -- (b2.south east);
    \draw[red!85, very thick] (b4.south west) -- (b4.north east);
    \draw[red!85, very thick] (b4.north west) -- (b4.south east);
    \draw[red!85, very thick] (b6.south west) -- (b6.north east);
    \draw[red!85, very thick] (b6.north west) -- (b6.south east);

    \node[font=\normalsize, align=center] at (-5.15,-5.0) {$S=\{2,7,18\}$};
    \draw[edge] (0,-4.8) -- node[elabel, right=2pt] {insert included features} (0,-5.85);

    \node[title] at (0,-6.35) {Support trie storage};
    \node[tnode] (root) at (0,-7.15) {$\varnothing$};

    \node[tnode] (n18) at (-2.6,-8.5) {$18$};
    \node[tleaf] (n12) at (2.6,-8.5) {$12$};

    \node[tnode] (n182) at (-4.15,-9.85) {$2$};
    \node[tleaf] (n189) at (-1.05,-9.85) {$9$};
    \node[tleaf] (n124) at (2.6,-9.85) {$4$};

    \node[tleaf] (n1827) at (-5.15,-11.2) {$7$};
    \node[tleaf] (n18211) at (-3.15,-11.2) {$11$};
    \node[tleaf] (n1893) at (-1.05,-11.2) {$3$};

    \draw[trieedge] (root) -- node[elabel, pos=0.55, above left=1pt] {$18$} (n18);
    \draw[trieedge] (root) -- node[elabel, pos=0.55, above right=1pt] {$12$} (n12);
    \draw[trieedge] (n18) -- node[elabel, pos=0.55, above left=1pt] {$2$} (n182);
    \draw[trieedge] (n18) -- node[elabel, pos=0.55, above right=1pt] {$9$} (n189);
    \draw[trieedge] (n12) -- node[elabel, pos=0.55, right=2pt] {$4$} (n124);
    \draw[trieedge] (n182) -- node[elabel, pos=0.55, above left=1pt] {$7$} (n1827);
    \draw[trieedge] (n182) -- node[elabel, pos=0.55, above right=1pt] {$11$} (n18211);
    \draw[trieedge] (n189) -- node[elabel, pos=0.55, right=2pt] {$3$} (n1893);

    \node[align=center, font=\small] at (-5.15,-11.85)
    {$S_1=\{2,7,18\}$\\[-1pt]$18\to2\to7$};
    \node[align=center, font=\small] at (-3.15,-11.85)
    {$S_2=\{2,11,18\}$\\[-1pt]$18\to2\to11$};
    \node[align=center, font=\small] at (-1.05,-11.85)
    {$S_3=\{3,9,18\}$\\[-1pt]$18\to9\to3$};
    \node[align=center, font=\small] at (2.6,-10.5)
    {$S_4=\{4,12\}$\\[-1pt]$12\to4$};
\end{tikzpicture}
\caption{
    The BnB search tree and the support trie are different objects.
    BnB edges are binary branch decisions, such as $z_j=1$ or $z_j=0$.
    After a Rashomon support is found, the included feature sequence is inserted into the trie; excluded branch decisions are not stored.
    In this example, the BnB path $z_{18}=1$, $z_2=1$, $z_7=1$ is stored as the trie path $18\to2\to7$.
}
\label{fig:rashomon_support_trie}
\end{figure}

%% file: references.bib
@misc{liu2026gpufriendlylinearlyconvergentfirstorder,
      title={GPU-friendly and Linearly Convergent First-order Methods for Certifying Optimal $k$-sparse GLMs},
      author={Jiachang Liu and Andrea Lodi and Soroosh Shafiee},
      year={2026},
      eprint={2603.01306},
      archivePrefix={arXiv},
      primaryClass={math.OC},
      url={https://arxiv.org/abs/2603.01306},
}

@inproceedings{liu2025scalable,
  title={Scalable First-order Method for Certifying Optimal k-Sparse {GLM}s},
  author={Liu, Jiachang and Shafiee, Soroosh and Lodi, Andrea},
  booktitle={Proceedings of the 42nd International Conference on Machine Learning},
  pages={39455--39481},
  year={2025},
  editor={Singh, Aarti and Fazel, Maryam and Hsu, Daniel and Lacoste-Julien, Simon and Berkenkamp, Felix and Maharaj, Tegan and Wagstaff, Kiri and Zhu, Jerry},
  volume={267},
  series={Proceedings of Machine Learning Research},
  month={13--19 Jul},
  publisher={PMLR}
}

@inproceedings{raina2009large,
  title={Large-scale deep unsupervised learning using graphics processors},
  author={Raina, Rajat and Madhavan, Anand and Ng, Andrew Y.},
  booktitle={Proceedings of the 26th Annual International Conference on Machine Learning},
  pages={873--880},
  year={2009},
}

@inproceedings{krizhevsky2012imagenet,
  title={{ImageNet} classification with deep convolutional neural networks},
  author={Krizhevsky, Alex and Sutskever, Ilya and Hinton, Geoffrey E.},
  booktitle={Advances in Neural Information Processing Systems},
  volume={25},
  pages={1097--1105},
  year={2012},
}

@inproceedings{applegate2021practical,
  title={Practical large-scale linear programming using primal-dual hybrid gradient},
  author={Applegate, David and D{\'\i}az, Mateo and Hinder, Oliver and Lu, Haihao and Lubin, Miles and O'Donoghue, Brendan and Schudy, Warren},
  booktitle={Advances in Neural Information Processing Systems},
  pages={20243--20257},
  year={2021},
}

@misc{lu2023cupdlp,
  title={{cuPDLP-C}: A Strengthened Implementation of {cuPDLP} for Linear Programming by {C} language},
  author={Lu, Haihao and Yang, Jinwen and Hu, Haodong and Huangfu, Qi and Liu, Jinsong and Liu, Tianhao and Ye, Yinyu and Zhang, Chuwen and Ge, Dongdong},
  year={2024},
  eprint={2312.14832},
  archivePrefix={arXiv},
  primaryClass={math.OC},
  url={https://arxiv.org/abs/2312.14832},
}

@misc{lu2023practical,
  title={A Practical and Optimal First-Order Method for Large-Scale Convex Quadratic Programming},
  author={Lu, Haihao and Yang, Jinwen},
  year={2025},
  eprint={2311.07710},
  archivePrefix={arXiv},
  primaryClass={math.OC},
  url={https://arxiv.org/abs/2311.07710},
}

@misc{lin2025practicalgpu,
  title={A Practical {GPU}-Enhanced Matrix-Free Primal-Dual Method for Large-Scale Conic Programs},
  author={Lin, Zhenwei and Xiong, Zikai and Ge, Dongdong and Ye, Yinyu},
  year={2025},
  eprint={2505.00311},
  archivePrefix={arXiv},
  primaryClass={math.OC},
  url={https://arxiv.org/abs/2505.00311},
}

@article{ceria1999convex,
  title={Convex programming for disjunctive convex optimization},
  author={Ceria, Sebasti{\'a}n and Soares, Jo{\~a}o},
  journal={Mathematical Programming},
  volume={86},
  number={3},
  pages={595--614},
  year={1999},
}

@article{frangioni2006perspective,
  title={Perspective cuts for a class of convex 0--1 mixed integer programs},
  author={Frangioni, Antonio and Gentile, Claudio},
  journal={Mathematical Programming},
  volume={106},
  number={2},
  pages={225--236},
  year={2006},
}

@article{gunluk2010perspective,
  title={Perspective reformulations of mixed integer nonlinear programs with indicator variables},
  author={G{\"u}nl{\"u}k, Oktay and Linderoth, Jeff},
  journal={Mathematical Programming},
  volume={124},
  number={1--2},
  pages={183--205},
  year={2010},
}

@article{busing2022monotone,
  title={Monotone regression: A simple and fast {O} (n) {PAVA} implementation},
  author={Busing, Frank MTA},
  journal={Journal of Statistical Software},
  volume={102},
  number={Code Snippet 1},
  pages={1--25},
  year={2022},
}

@book{rockafellar1970convex,
  title={Convex Analysis},
  author={Rockafellar, R. Tyrrell},
  year={1970},
  publisher={Princeton University Press},
}

@book{beck2017first,
  title={First-Order Methods in Optimization},
  author={Beck, Amir},
  year={2017},
  publisher={SIAM},
}

@article{achterberg2005branching,
  title={Branching rules revisited},
  author={Achterberg, Tobias and Koch, Thorsten and Martin, Alexander},
  journal={Operations Research Letters},
  volume={33},
  number={1},
  pages={42--54},
  year={2005},
  doi={10.1016/j.orl.2004.04.002},
}

@article{achterberg2009scip,
  title={{SCIP}: solving constraint integer programs},
  author={Achterberg, Tobias},
  journal={Mathematical Programming Computation},
  volume={1},
  number={1},
  pages={1--41},
  year={2009},
  doi={10.1007/s12532-008-0001-1},
}

@article{linderoth1999computational,
  title={A computational study of search strategies for mixed integer programming},
  author={Linderoth, J. T. and Savelsbergh, M. W. P.},
  journal={INFORMS Journal on Computing},
  volume={11},
  number={2},
  pages={173--187},
  year={1999},
  doi={10.1287/ijoc.11.2.173},
}

@article{morrison2016branch,
  title={Branch-and-bound algorithms: A survey of recent advances in searching, branching, and pruning},
  author={Morrison, David R. and Jacobson, Sheldon H. and Sauppe, Jason J. and Sewell, Edward C.},
  journal={Discrete Optimization},
  volume={19},
  pages={79--102},
  year={2016},
  doi={10.1016/j.disopt.2016.01.005},
}

@article{breiman2001statistical,
  title={Statistical Modeling: The Two Cultures (with Comments and a Rejoinder by the Author)},
  author={Breiman, Leo},
  journal={Statistical Science},
  volume={16},
  number={3},
  pages={199--231},
  year={2001}
}

@article{ustun2019learning,
  title={Learning Optimized Risk Scores},
  author={Ustun, Berk and Rudin, Cynthia},
  journal={Journal of Machine Learning Research},
  volume={20},
  number={150},
  pages={1--75},
  year={2019}
}

@article{ustun2016supersparse,
  title={Supersparse Linear Integer Models for Optimized Medical Scoring Systems},
  author={Ustun, Berk and Rudin, Cynthia},
  journal={Machine Learning},
  volume={102},
  number={3},
  pages={349--391},
  year={2016},
  publisher={Springer}
}

@article{fisher2019all,
  title={All Models are Wrong, but Many are Useful: Learning a Variable's Importance by Studying an Entire Class of Prediction Models Simultaneously},
  author={Fisher, Aaron and Rudin, Cynthia and Dominici, Francesca},
  journal={Journal of Machine Learning Research},
  volume={20},
  number={177},
  pages={1--81},
  year={2019}
}

@article{dong2020exploring,
  title={Exploring the Cloud of Variable Importance for the Set of All Good Models},
  author={Dong, Jiayun and Rudin, Cynthia},
  journal={Nature Machine Intelligence},
  volume={2},
  pages={810--824},
  year={2020}
}

@inproceedings{liu2022fastsparse,
  title={Fast Sparse Classification for Generalized Linear and Additive Models},
  author={Liu, Jiachang and Zhong, Chudi and Seltzer, Margo and Rudin, Cynthia},
  booktitle={Proceedings of the 25th International Conference on Artificial Intelligence and Statistics},
  pages={9304--9333},
  year={2022}
}

@inproceedings{xin2022exploring,
  title={Exploring the Whole {Rashomon} Set of Sparse Decision Trees},
  author={Xin, Rui and Zhong, Chudi and Chen, Zhi and Takagi, Takuya and Seltzer, Margo and Rudin, Cynthia},
  booktitle={Advances in Neural Information Processing Systems},
  volume={35},
  year={2022}
}

@inproceedings{liu2022fasterrisk,
  title={{FasterRisk}: Fast and Accurate Interpretable Risk Scores},
  author={Liu, Jiachang and Zhong, Chudi and Li, Boxuan and Seltzer, Margo and Rudin, Cynthia},
  booktitle={Advances in Neural Information Processing Systems},
  volume={35},
  year={2022}
}

@inproceedings{donnelly2023rashomon,
  title={The {Rashomon} Importance Distribution: Getting {RID} of Unstable, Single Model-Based Variable Importance},
  author={Donnelly, Jon and Katta, Srikar and Rudin, Cynthia and Browne, Edward P.},
  booktitle={Advances in Neural Information Processing Systems},
  volume={36},
  year={2023}
}

@inproceedings{zhong2023sparsegam,
  title={Exploring and Interacting with the Set of Good Sparse Generalized Additive Models},
  author={Zhong, Chudi and Chen, Zhi and Liu, Jiachang and Seltzer, Margo and Rudin, Cynthia},
  booktitle={Advances in Neural Information Processing Systems},
  volume={36},
  pages={56673--56699},
  year={2023}
}

@inproceedings{donnelly2025rashomonproto,
  title={{Rashomon} Sets for Prototypical-Part Networks: Editing Interpretable Models in Real-Time},
  author={Donnelly, Jon and Guo, Zhicheng and Barnett, Alina Jade and McTavish, Hayden and Chen, Chaofan and Rudin, Cynthia},
  booktitle={Proceedings of the IEEE/CVF Conference on Computer Vision and Pattern Recognition},
  year={2025}
}

@inproceedings{babbar2025nearoptimal,
  title={Near-Optimal Decision Trees in a {SPLIT} Second},
  author={Babbar, Varun and McTavish, Hayden and Rudin, Cynthia and Seltzer, Margo},
  booktitle={International Conference on Machine Learning},
  year={2025}
}

@inproceedings{mctavish2025predictiveequivalence,
  title={Leveraging Predictive Equivalence in Decision Trees},
  author={McTavish, Hayden and Boner, Zachery and Donnelly, Jon and Seltzer, Margo and Rudin, Cynthia},
  booktitle={International Conference on Machine Learning},
  year={2025}
}

@inproceedings{rudin2024amazing,
  title={Position: Amazing Things Come From Having Many Good Models},
  author={Rudin, Cynthia and Zhong, Chudi and Semenova, Lesia and Seltzer, Margo and Parr, Ronald and Liu, Jiachang and Katta, Srikar and Donnelly, Jon and Chen, Harry and Boner, Zachery},
  booktitle={Proceedings of the 41st International Conference on Machine Learning},
  pages={42783--42795},
  year={2024}
}

@misc{nguyen2026realitrees,
  title={{REALITrees}: {Rashomon} Ensemble Active Learning for Interpretable Trees},
  author={Nguyen, Simon D. and McTavish, Hayden and Hoffman, Kentaro and Rudin, Cynthia and McCormick, Tyler H.},
  year={2026},
  eprint={2603.22750},
  archivePrefix={arXiv},
  primaryClass={cs.LG},
  url={https://arxiv.org/abs/2603.22750}
}

@misc{haghighat2026resolving,
  title={Resolving Predictive Multiplicity for the {Rashomon} Set},
  author={Haghighat, Parian and Anahideh, Hadis and Rudin, Cynthia},
  year={2026},
  eprint={2601.09071},
  archivePrefix={arXiv},
  primaryClass={cs.LG},
  url={https://arxiv.org/abs/2601.09071}
}

@inproceedings{atamturk2020safe,
  title={Safe Screening Rules for {$\ell_0$}-Regression from Perspective Relaxations},
  author={Atamturk, Alper and G{\'o}mez, Andr{\'e}s},
  booktitle={Proceedings of the 37th International Conference on Machine Learning},
  pages={421--430},
  year={2020}
}

@article{xie2020scalable,
  title={Scalable Algorithms for the Sparse Ridge Regression},
  author={Xie, Weijun and Deng, Xinwei},
  journal={SIAM Journal on Optimization},
  volume={30},
  number={4},
  pages={3359--3386},
  year={2020}
}

@article{bertsimas2020sparse1,
  title={Sparse Regression: Scalable Algorithms and Empirical Performance},
  author={Bertsimas, Dimitris and Pauphilet, Jean and Van Parys, Bart},
  journal={Statistical Science},
  volume={35},
  number={4},
  pages={555--578},
  year={2020}
}

@article{bertsimas2020sparse2,
  title={Sparse High-Dimensional Regression: Exact Scalable Algorithms and Phase Transitions},
  author={Bertsimas, Dimitris and Van Parys, Bart},
  journal={The Annals of Statistics},
  volume={48},
  number={1},
  pages={300--323},
  year={2020}
}

@article{hazimeh2020fast,
  title={Fast Best Subset Selection: Coordinate Descent and Local Combinatorial Optimization Algorithms},
  author={Hazimeh, Hussein and Mazumder, Rahul},
  journal={Operations Research},
  volume={68},
  number={5},
  pages={1517--1537},
  year={2020}
}

@article{hazimeh2022sparse,
  title={Sparse Regression at Scale: Branch-and-Bound Rooted in First-Order Optimization},
  author={Hazimeh, Hussein and Mazumder, Rahul and Saab, Ali},
  journal={Mathematical Programming},
  volume={196},
  number={1},
  pages={347--388},
  year={2022}
}

@inproceedings{guyard2024el0ps,
  title={A New Branch-and-Bound Pruning Framework for {$\ell_0$}-Regularized Problems},
  author={Guyard, Th{\'e}o and Herzet, C{\'e}dric and Elvira, Cl{\'e}ment and Arslan, Ayse-Nur},
  booktitle={Proceedings of the 41st International Conference on Machine Learning},
  pages={48077--48096},
  year={2024}
}

@inproceedings{liu2024okridge,
  title={{OKR}idge: Scalable Optimal {$k$}-Sparse Ridge Regression},
  author={Liu, Jiachang and Rosen, Sam and Zhong, Chudi and Rudin, Cynthia},
  booktitle={Advances in Neural Information Processing Systems},
  pages={41076--41258},
  year={2023}
}

@article{bertsimas2023learning,
  title={Learning Sparse Nonlinear Dynamics via Mixed-Integer Optimization},
  author={Bertsimas, Dimitris and Gurnee, Wes},
  journal={Nonlinear Dynamics},
  volume={111},
  number={7},
  pages={6585--6604},
  year={2023}
}

@article{dedieu2021learning,
  title={Learning Sparse Classifiers: Continuous and Mixed Integer Optimization Perspectives},
  author={Dedieu, Antoine and Hazimeh, Hussein and Mazumder, Rahul},
  journal={Journal of Machine Learning Research},
  volume={22},
  number={135},
  pages={1--47},
  year={2021}
}

@misc{han2024accelerating,
  title={Accelerating Low-Rank Factorization-Based Semidefinite Programming Algorithms on {GPU}},
  author={Han, Qiushi and Lin, Zhenwei and Liu, Hanwen and Chen, Caihua and Deng, Qi and Ge, Dongdong and Ye, Yinyu},
  year={2024},
  eprint={2407.15049},
  archivePrefix={arXiv},
  primaryClass={math.OC},
  url={https://arxiv.org/abs/2407.15049}
}

@misc{de2024power,
  title={On the Power of Linear Programming for {K}-Means Clustering},
  author={De Rosa, Antonio and Khajavirad, Aida and Wang, Yakun},
  year={2024},
  eprint={2402.01061},
  archivePrefix={arXiv},
  primaryClass={math.OC},
  url={https://arxiv.org/abs/2402.01061}
}

@misc{blin2026batchedlp,
  title={Batched First-Order Methods for Parallel {LP} Solving in {MIP}},
  author={Blin, Nicolas and Gualandi, Stefano and Maes, Christopher and Lodi, Andrea and Stellato, Bartolomeo},
  year={2026},
  eprint={2601.21990},
  archivePrefix={arXiv},
  primaryClass={math.OC},
  url={https://arxiv.org/abs/2601.21990}
}

@misc{meng2026gpu,
  title={A {GPU}-Accelerated Nonlinear Branch-and-Bound Framework for Sparse Linear Models},
  author={Meng, Xiang and Lucas, Ryan and Mazumder, Rahul},
  year={2026},
  eprint={2602.04551},
  archivePrefix={arXiv},
  primaryClass={math.OC},
  url={https://arxiv.org/abs/2602.04551}
}

@misc{asuncion2007uci,
  title={{The UCI Machine Learning Repository}},
  author={Asuncion, Arthur and Newman, David},
  year={2007},
  publisher={Irvine, CA, USA}
}

@article{atamturk2020supermodularity,
  title={Supermodularity and valid inequalities for quadratic optimization with indicators},
  author={Atamt{\"u}rk, Alper and G{\'o}mez, Andr{\'e}s},
  journal={Mathematical Programming},
  volume={201},
  number={1--2},
  pages={295--338},
  year={2023},
  publisher={Springer}
}

@article{atamturk2021sparse,
  title={Sparse and Smooth Signal Estimation: Convexification of {$\ell_0$}-Formulations},
  author={Atamt{\"u}rk, Alper and G{\'o}mez, Andr{\'e}s and Han, Shaoning},
  journal={Journal of Machine Learning Research},
  volume={22},
  number={52},
  pages={1--43},
  year={2021}
}

@misc{gurobi,
  author={{Gurobi Optimization, LLC}},
  title={{Gurobi Optimizer Reference Manual}},
  year={2025}
}

@manual{mosek,
  author={{MOSEK ApS}},
  title={The {MOSEK} optimization toolbox for {MATLAB} manual. Version 11.0.4},
  year={2025}
}

@misc{santander,
  author={Piedra, Mercedes and Dane, Sohier and Jimenez, Soraya},
  title={Santander Customer Transaction Prediction},
  year={2019},
  howpublished={\url{https://kaggle.com/competitions/santander-customer-transaction-prediction}},
  note={{Kaggle} competition}
}

@article{shafiee2024constrained,
  title={Constrained optimization of rank-one functions with indicator variables},
  author={Shafiee, Soroosh and K{\i}l{\i}n{\c{c}}-Karzan, Fatma},
  journal={Mathematical Programming},
  volume={208},
  number={1--2},
  pages={533--579},
  year={2024},
  publisher={Springer}
}

@inproceedings{wei2020convexification,
  title={On the convexification of constrained quadratic optimization problems with indicator variables},
  author={Wei, Linchuan and G{\'o}mez, Andr{\'e}s and K{\"u}{\c{c}}{\"u}kyavuz, Simge},
  booktitle={Integer Programming and Combinatorial Optimization},
  pages={433--447},
  year={2020}
}

@article{wei2022ideal,
  title={Ideal formulations for constrained convex optimization problems with indicator variables},
  author={Wei, Linchuan and G{\'o}mez, Andr{\'e}s and K{\"u}{\c{c}}{\"u}kyavuz, Simge},
  journal={Mathematical Programming},
  volume={192},
  number={1},
  pages={57--88},
  year={2022},
  publisher={Springer}
}

@misc{corduk2025gpuacceleratedprimalheuristics,
  title={{GPU}-Accelerated Primal Heuristics for Mixed Integer Programming},
  author={Akif {\c{C}}{\"o}rd{\"u}k and Piotr Sielski and Alice Boucher and Kumar Aatish},
  year={2025},
  eprint={2510.20499},
  archivePrefix={arXiv},
  primaryClass={math.OC},
  url={https://arxiv.org/abs/2510.20499}
}
